\newcommand{\rom}[1]{\lowercase\expandafter{\romannumeral #1\relax}}
\newtheorem{thm}{Theorem}
\newtheorem{proposition}[thm]{Proposition}
\title{bsnsing: A decision tree induction method based on recursive optimal boolean rule composition}
\author{
  Yanchao Liu \\
  Department of Industrial \& Systems Engineering \\
  Wayne State University \\
  Detroit, Michigan, USA\\
  \texttt{yanchaoliu@wayne.edu} \\
}
\begin{document}
\maketitle

\begin{abstract}
This paper proposes a new mixed-integer programming (MIP) formulation to optimize split rule selection in the decision tree induction process,  and develops an efficient search algorithm that is able to solve practical instances of the MIP model faster than commercial solvers.  The formulation is novel for it directly maximizes the Gini reduction,  an effective split selection criterion which has never been modeled in a mathematical program for its nonconvexity.  The proposed approach differs from other optimal classification tree models in that it does not attempt to optimize the whole tree,  therefore the flexibility of the recursive partitioning scheme is retained and the optimization model is more amenable. The approach is implemented in an open-source R package named bsnsing.  Benchmarking experiments on 75 open data sets suggest that bsnsing trees are the most capable of discriminating new cases compared to trees trained by other decision tree codes including the rpart, C50, party and tree packages in R. Compared to other optimal decision tree packages, including DL8.5, OSDT, GOSDT and indirectly more, bsnsing stands out in its training speed, ease of use and broader applicability without losing in prediction accuracy. 
\end{abstract}

\keywords{Classification Trees,  Mixed-Integer Programming,  Statistical Computing,  R}

\section{Introduction} \label{sec:introduction}

Classification is the task of assigning objects to one of several predefined categories. 
A classification tree (or a decision tree classifier) is a predictive model represented in a tree-like structure.  Without making excessive assumptions about the data distribution, a classification tree partitions the input space into rectilinear (axis-parallel) regions and ultimately gives a set of \emph{If...Then...} rules to classify outputs or make predictions. 
Starting from the root node, each internal node is split into two or more child nodes based on the input values. The split stops when some terminal condition is met. The terminal nodes of the tree are called leaves, which represent the predicted target. Cases move down the tree along branches according to their input values and all cases reaching a particular leaf are assigned the same predicted value. 
The tree-like structure connects naturally to the divide-and-conquer strategy of how people judge, plan and decide.  Therefore, the technique is widely adopted for making decisions that bear substantial consequences for the decision maker.  Example applications include disease diagnosis \citep{10.1371/journal.pntd.0000196, GHIASI2020105400},  loan approval \citep{MANDALA2012406, 9325614} and investment selection \citep{Sorensen42}. 

In this paper,  we propose a classification tree induction method based on solving a mixed-integer programming (MIP) model at each split of a node.  This new method can generate trees that frequently outperform trees built by other off-the-shelf tree libraries in R,  the popular statistical computing system.  To achieve this,  the proposed MIP model explicitly maximizes the reduction in the node impurity and allows a tree node to be split by a multivariate boolean rule.  Such split rules are more flexible and in certain cases more efficient at characterizing nonlinear patterns in data,  and in the meantime,  remain highly interpretable.  To conquer the computational challenge,  we develop an efficient implicit enumeration algorithm that solves the MIP model faster than the state-of-the-art optimization solvers.  
Experiments on an extensive collection of machine learning data sets suggest that the method is accurate in prediction performance and scales reasonably well on large training sets.  The proposed framework is implemented in an open-source R package named {\ttfamily bsnsing},  available for a broad community of data science researchers and practitioners.  

While MIP techniques have been used in various ways in the recent literature for building classifiers,  our method is unique in several aspects.  First,  to our knowledge,  it is the first MIP model that is able to maximize the Gini reduction of a split,  which has been known to be an effective split criterion but in the meantime a nonlinear nonconvex function of the split decision.  Impurity reduction is an oft-used criterion for split selection in leading decision tree heuristics for its superiority in generating well-balanced child nodes over the accuracy maximization criterion,  but it has never been used in an optimization framework due to its nonconvexity.  Second, the MIP model is used for rule selection at each node split,  while other effective elements of the recursive partitioning framework,  such as split point generation,  early termination and tree pruning, can be separately implemented with great flexibility.  Third,  along with the novel formulation we also develop an efficient exact solution algorithm which runs faster than commercial solver codes,  making the {\ttfamily bsnsing} package independent of any commercial optimization solvers. 

The remainder of the paper is organized as follows.  Section \ref{sec:literature} reviews the literature on decision tree induction and particularly the recent literature on optimal classification tree (OCT) developments based on mixed-integer optimization. 
Section \ref{sec:method} develops the main models and algorithms that underlie the bsnsing package.   Section \ref{sec:evaluation} presents computational experiments to demonstrate the effectiveness of the proposed method and software tool.  Section \ref{sec:conclusion} concludes the paper with pointers for future work. 

\section{Related Literature} \label{sec:literature}
As an extremely flexible non-parametric framework, classification trees delegate a great deal of freedom to algorithm design and implementation \citep{IntroDM2005}. The entire search space for building the ``best'' tree can be enormous. Consider, for instance, splitting a node by a categorical variable consisting of 10 distinct levels. There are 115,974 non-trivial ways of splitting, i.e., $B_{10}-1$, the 10th \emph{Bell number} minus 1. Moreover, for a set of 10 single-variable split rules, there are more than 3.6 million (i.e., 10!) differnt ways to order them in a decision list. It is impractical to evaluate all possible splits and all possible ordering of rules. It is shown in \cite{HYAFIL197615} that the ``optimal decision tree'' problem is NP-complete, and this conclusion has been corroborated in many subsequent attempts at constructing optimal decision trees using various optimization modeling techniques. 

The difficulty incurred by the enormity of the search space has been dealt with along three routes in the literature. The first route is via using greedy splitting methods \citep{breiman1984book, Quinlan:1993:CPM:152181} under the recursive partitioning framework, in which a number of candidate splits are compared and a best one is chosen to split a node. In this general paradigm, there is a great variety of algorithms addressing issues such as how the split variables are selected, how the split points are determined and how the split quality is assessed, etc. Many efficient decision tree algorithms, including C4.5 \citep{Quinlan:1993:CPM:152181}, CHAID \citep{10.2307/2986296}, CART \citep{breiman1984book}, GUIDE \citep{loh2009}, and the recent Bayesian-based approach \citep{letham2015, PMLR2017_ScalableBRL} fall under this paradigm. 

The second route is to trim the overall search space down to a reduced model space (as a surrogate) in which global optimization is used to find an optimal model.  
A popular choice of the surrogate space is the frequent itemsets, e.g., results from association rule mining algorithms \citep{Agrawal93miningassociation,WIDM:WIDM1074,Liu1998}.  \cite{Bertsimas12aninteger} devised a two-step approach, in which the first step is to generate an efficient frontier of $L$ candidate itemsets by solving $L$ mixed-integer optimization (MIO) problems, one for each candidate itemset, and then solve another (larger) MIO problem to rank all the candidates based on their predictive accuracy on all transactions. The top-ranked candidate is chosen as the final classifier.  This approach is computationally demanding because of the attempt to build the whole classifier by solving one large MIO problem. 
\cite{Nijssen:2010:OCD:1830978.1830979} noted the link between decision trees over a binary feature space and the itemset lattice, and built a recursive tree learning algorithm,  which did not invoke a numerical optimization process.  \cite{10.1145/3097983.3098047} considered the class of rule lists assembled from pre-mined frequent itemsets and searched for an optimal rule list that minimizes a regularized risk function, 
which was able to solve (and prove optimality for) fairly large classification instances. 
This stream of research is important in constructing the notion of optimality in tree learning and exploring the use of discrete optimization techniques such as the branch-and-bound algorithm. 

The third route is using exhaustive search for comprehensible rules that do not involve too many clauses.
The 1R algorithm \citep{Holte1993} searched exhaustively the space of single-variable rules and then made classification or prediction based on the best rule found. Considering its simplicity, it was a surprise that it performed well on many data sets.  Nonetheless, the single-variable, single-split strategy will apparently sacrifice performance in cases where more complex and subtle patterns need to be characterized. The EXPLORE algorithm \citep{Rijnbeek:2010:FSA:1825171.1825175} performs an exhaustive search in the complete rule spaces consisting of 1 term, 2 terms and so forth, until the increment of the number of allowed terms stops giving a better performance on the validation set than the previous iteration. By searching all possible Disjunctive Normal Forms (DNF), the algorithm can find the best DNF rule up to a certain complexity level. An important observation given by this paper is that the phenomenon of oversearching \citep{Quinlan:1995:OLS:1643031.1643032}, i.e., the hypothesis that the more rules are evaluated the greater the chance of finding a fluke and poorly-generalizable rule, does not always hold. Even though exhaustive search is hardly viable for large cases, this gives an encouraging indication that aggressively seeking optimality on the training set does not necessarily incur overfitting, if the sense of optimality is defined on a prudent metric and the tree complexity is properly regulated,  an insight that was also presented in \cite{Bertsimas2017}.  In this paper, we develop a more principled exhaustive search procedure to solve the split selection problem.

\begin{table}[h!]
\begin{center}
\small
\caption{Comparison of different tree structures.  The proposed multi-variable rectilinear splits are more efficient than single-variable splits in carving out nonlinear features, and preserves interpretability better than linear combination splits. }
\label{tbl:tree_structure_compare}
\begin{tabular}{| c | p{6.5cm} | p{4cm} |}
     \hline
      Illustration & Characteristics & Software Codes\\ \hline
      \begin{minipage}{.2\textwidth}
      \centering \includegraphics[width=0.7\textwidth]{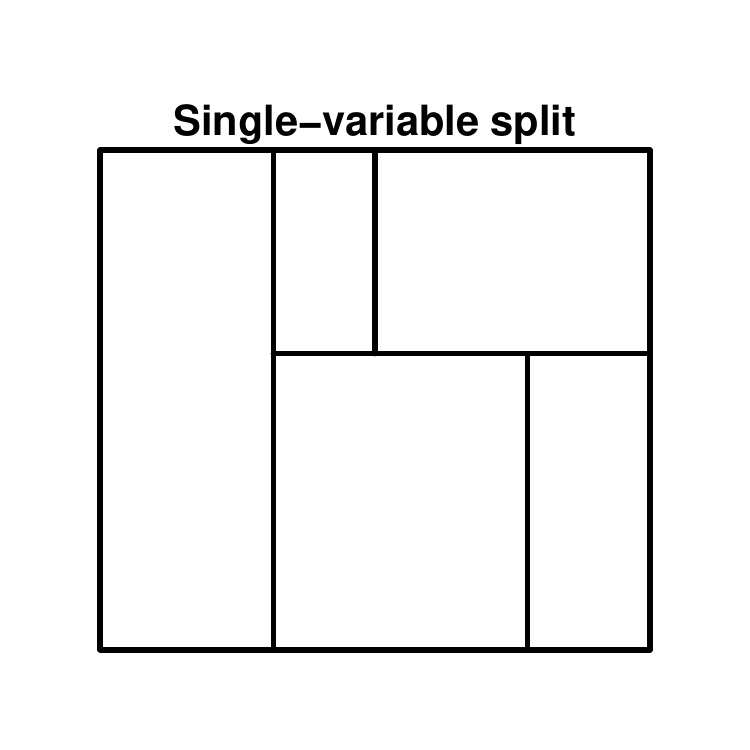}
      \end{minipage}
      & 
      \vspace{0.15 in}
      \begin{itemize}[leftmargin=*, noitemsep]
      \setlength \itemsep{0em}
      \item Intuitive decision rules
      \item E.g., $\{\text{Is } \texttt{Age} \le 20?\}$ 
      \item Split only produces $(n-1)$-dimensional rectilinear halfspaces
      \end{itemize}
      &
      \begin{itemize}[leftmargin=*, label={}, noitemsep]
      \setlength \itemsep{0em}
      \item ID3, C5.0, CHAID
      \item SLIQ, rpart, party
      \end{itemize}
      \\ 
      \cline{2-3}
      \begin{minipage}{.2\textwidth}
      \centering \includegraphics[width=0.7\textwidth]{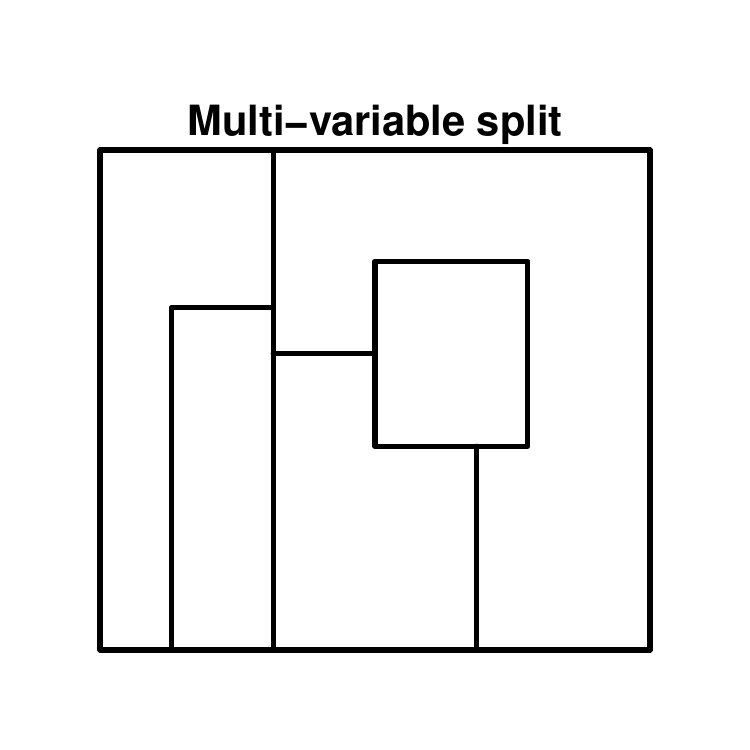}
      \end{minipage}
      & 
      \vspace{0.15 in}
      \begin{itemize}[leftmargin=*, noitemsep]
      \setlength \itemsep{0em}
      \item Intuitive decision rules
      \item E.g., $\{\text{Is } 20 \le \texttt{Age} \le 25 \; \& \;  19 \le \texttt{BMI} \le 24?\}$       
      \item Split can generates closed and open hypercubes of any dimension
      \end{itemize}
	  &
      \begin{itemize}[leftmargin=*, label={}, noitemsep]
      \setlength \itemsep{0em}
      \item CORELS,  \textbf{bsnsing}
      \end{itemize}
      \\ 
      \cline{2-3}
      \begin{minipage}{.2\textwidth}
      \centering \includegraphics[width=0.7\textwidth]{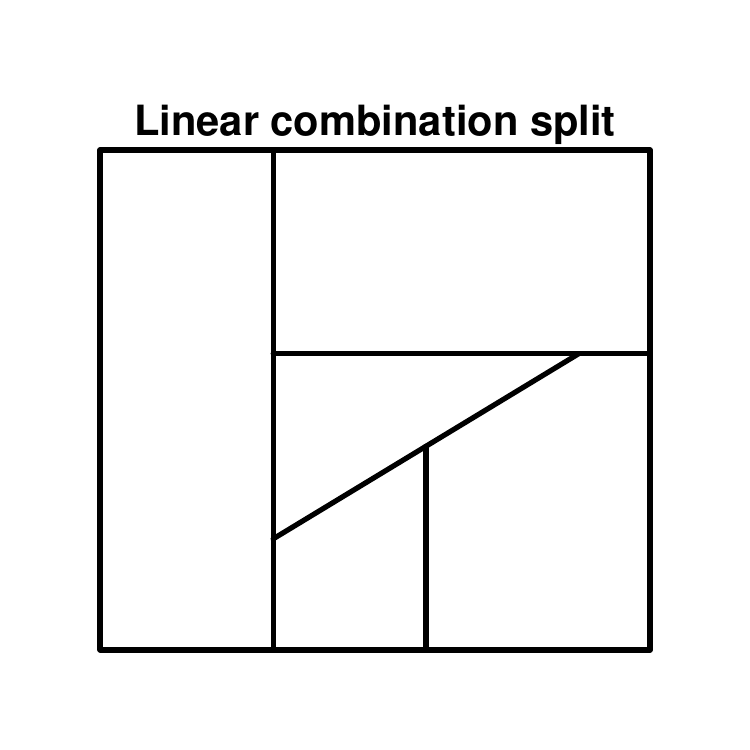}
      \end{minipage}
      & 
      \vspace{0.15 in}
      \begin{itemize}[leftmargin=*, noitemsep]
      \setlength \itemsep{0em}
      \item Obscure decision rules
      \item E.g., $\{\text{Is } 0.3* \texttt{Age} - 0.5* \texttt{BMI} \ge 3.3?\}$
      \item Split can produce halfspaces of any dimension
      \end{itemize}
      & 
      \begin{itemize}[leftmargin=*, label={}, noitemsep]
      \setlength \itemsep{0em}
      \item CART, FACT
      \item QUEST,  CRUISE
      \item GUIDE, OCT-H
      \end{itemize}
      \\ 
      \hline
      \begin{minipage}{.2\textwidth}
      \centering \includegraphics[width=0.7\textwidth]{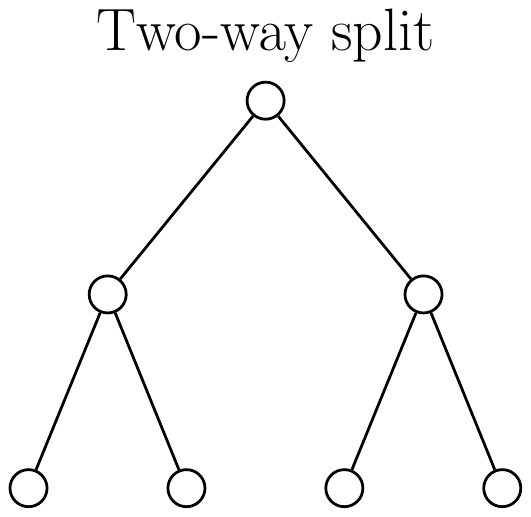}
      \end{minipage}
      & 
      \vspace{0.15 in}
      \begin{itemize}[leftmargin=*, noitemsep]
      \setlength \itemsep{0em}
      \item \emph{If...Then...} rule clause
      \item Less efficient for multi-class problem
      \end{itemize}
      & 
      \begin{itemize}[leftmargin=*, label={}, noitemsep]
      \setlength \itemsep{0em}
      \item CART, QUEST, SLIQ
      \item rpart, \textbf{bsnsing}
      \end{itemize}
      \\ 
      \cline{2-3}
      \begin{minipage}{.2\textwidth}
      \centering \includegraphics[width=0.98\textwidth]{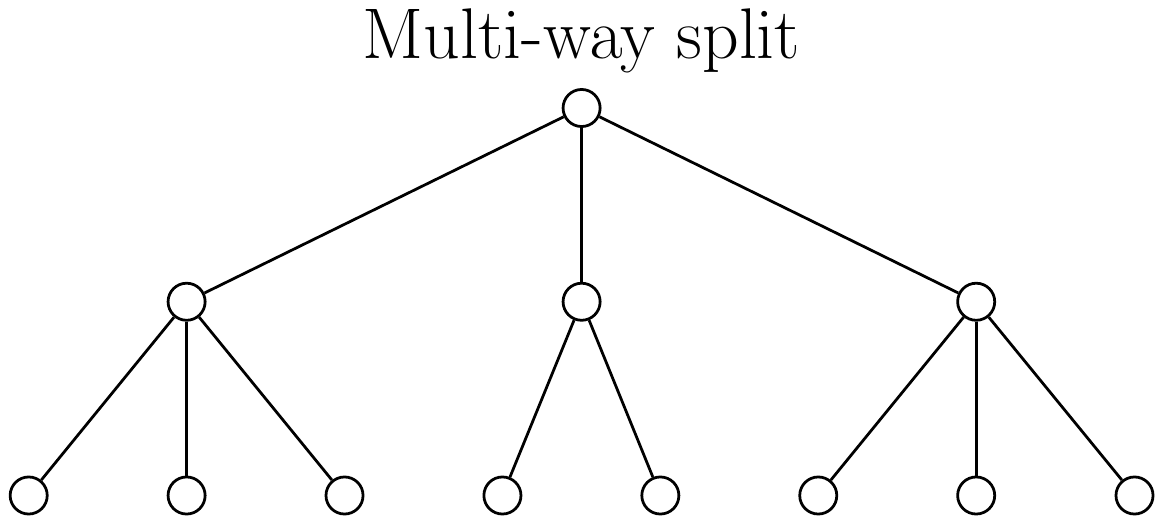}
      \end{minipage}
      & 
      \vspace{0.15 in}
      \begin{itemize}[leftmargin=*, noitemsep]
      \setlength \itemsep{0em}
      \item \emph{If...Then...ElseIf...} rule clause
      \item Suits multi-class problem better
      \item Depletes data too quickly
      \end{itemize}
      & 
      \vspace{0.15in}
      \begin{itemize}[leftmargin=*, label={}, noitemsep]
      \setlength \itemsep{0em}
      \item C5.0, CHAID, FACT
      \item CRUISE, QUEST
      \item party
      \end{itemize}
      \\
      \hline
      
\end{tabular}
\end{center}
\end{table}

Using mixed-integer optimization to tackle classification problems has been frequently investigated in recent years,  following the seminar work of \cite{Bertsimas12aninteger}. 
\cite{pmlr-v28-malioutov13} formulated the rule-based classification problem as an integer program and showed that under certain conditions (among which,  an excessively large pool of boolean questions needed to be generated from the original feature set) the rules could be recovered exactly by solving the linear programming (LP) relaxation.  A probabilistic guarantee of recovery was shown when the required conditions were satisfied weakly.   
\cite{10.1145/2623330.2623648} addressed the problem of class imbalance in classification training and proposed an MIP model to find the classification rule set that optimizes a weighted balance between positive and negative class accuracies.   They developed a ``characterize then discriminate'' approach to decompose the problem into manageable subproblems hence alleviate the computational challenge of solving the full MIP.   
\cite{Bertsimas2017} cast the problem that CART attempted to solve as a global optimization problem, and instantiated the canonical problem with two MIP models,  OCT for building trees using univariate splits and OCT-H for trees of multivariate splits (separating hyperplanes). The models minimized a weighted sum of the total misclassification cost and the number of splits in the tree,  and were constrained by two hyperparameters, the tree depth and the leaf node size.  The weighting factor in the objective function needed to be tuned via a validation set to achieve the best performance.  The robust version of these models were given in \cite{doi:10.1287/ijoo.2018.0001}.  The strengths of OCT and OCT-H complemented each other and they were able to outperform CART in many cases by significant margins. 
\cite{JMLR:v18:16-003} focused on searching for a small number of short rules (disjunctive of conjunctives, e.g.,  ``(A and B) or C or ... '' kind of a rule) by approximately solving a ``maximum a posteriori'' problem by the simulated annealing algorithm.  The authors applied the method to predict user behavior in a recommender system and reported favorable performance.  
\cite{Zhang_2019} formulated the optimal classification tree (OCT) problem as an integer program in which the number of integer (binary) decision variables would not depend on the number of training data points (though the number of constraints would, and big-M constraints were used).  The formulation was demonstrated to outperform previous OCT formulations,  including \cite{10.1007/978-3-319-59776-8_8} and \cite{Bertsimas2017},  on several test data sets.

\cite{DBLP:journals/corr/abs-1904-12847} proposed an optimal sparse decision trees (OSDT) algorithm that extends the CORELS algorithm \citep{10.1145/3097983.3098047} (which creates optimal rule lists) to create optimal trees. The algorithm attempts to minimize the weighted sum of the misclassification error and the number of leaves in the tree. A specialized search procedure within a branch-and-bound framework is employed for solution. A Python program that implements OSDT is available on Hu's Github page. 
\cite{lin2020generalized} provided a general framework for decision tree optimization which was able to handle a variety of objective functions and optimize over continuous features. The authors observed orders of magnitude speedup in decision tree construction compared to the state-of-the-art. A C++ based implementation, called GOSDT, as well as a Python wrapper, is available on Lin's Github page. 
\cite{Aglin_Nijssen_Schaus_2020} developed a DL8.5 algorithm which extends DL8 initially proposed in \cite{10.1145/1281192.1281250}. DL8.5 draws upon the association rule mining literature \citep{Agrawal93miningassociation} and uses branch-and-bound search along with a caching mechanism to achieve a fast training speed. It has been demonstrated that DL8.5 outruns the BinOCT method by orders of magnitude in training speed. DL8.5 was implemented in C++ and a Python package is publicly available (i.e., pip install dl8.5). 
In Section \ref{sec:comp_dl85_osdt}, we perform computational comparisons with DL8.5, OSDT and GOSDT on a number of binary classification data sets to demonstrate {\ttfamily bsnsing}'s advantage in training speed amongst these latest developments in the OCT literature. 
\cite{DBLP:conf/nips/ZhuMPNK20} proposed an MIP model for supervised classification by optimally organizing support vector machine (SVM) type of separating hyperplanes in a tree structure of a given depth,  and achieved outstanding performance on a collection of test sets.  
An earlier work of this kind can be found in \cite{doi:10.1287/ijoc.1030.0047}, where the separating hyperplane was obtained via solving a nonlinear conconvex program. 
Ease of interpretation (or interpretability) of such tree-like models was clearly not an emphasis in those works. 
\cite{aghaei2020learning} presented a flow-based MIP formulation for the OCT problem. The formulation  did not use big-M constraints and hence boasted a stronger LP relaxation than alternative formulations.  The authors developed a Bender's decomposition paradigm to further improve solution efficiency.  Substantial speedup in comparison to other OCT approaches was reported. 

Existing MIP-based OCT investigations invariably attempted to internalize (i.e.,  globally optimize) the whole process of building the (tree or rule-based) classifier,  and abandoned the recursive partitioning framework.  
Consequently, any post-training modification, such as pruning,  to the optimal tree would nullify optimality in intractable ways,  so it would be difficult to justify pruning or other tactics aimed at improving prediction performance.  Furthermore,  making all decisions regarding tree induction in a single MIP model is inevitably challenged by the dilemma that, either a lot of simplifications must be imposed to the classifier to limit the size of the search space (hence introducing high bias), or the decision model ends up being computationally prohibitive.  In the author's opinion, solving an MIP usually takes too much time to warrant its inclusion in a meaningful hyperparameter search process which requires training a number of candidate classifiers.  For instance,  a 10-minute run time is not uncommon for solving a moderately sized MIP model,  but it would feel somewhat long for a user of software tools such as R, SAS,  Stata and IBM SPSS, which are able to produce a tree in no more than a couple of seconds (for reasonably sized data sets, such as those oft-used for benchmarking in the literature).  As a result, software codes for most OCT approaches, despite their potential appeal in high stake applications where training/tuning time is less of a concern than other merits such as interpretability, accuracy and rule set sparsity, etc., have not been widely picked up by the broader statistical learning or data science community.  
The {\ttfamily bsnsing} approach attempts to alleviate some of these challenges by taking a middle ground - using MIP to optimize only the split decision while other aspects of tree management,  such as when to stop,  how to generalize candidate split points,  and pruning,  etc.,  are offloaded to the recursive partitioning framework. 

Table \ref{tbl:tree_structure_compare} summarizes the prevalent tree structures available in software tools and how the proposed {\ttfamily bsnsing} package expands the landscape.   Existing algorithms under the recursive partitioning framework either employ one-variable splits which are too restrictive for expressing nonlinear patterns, or employ linear combination splits which obscures interpretation.  \cite{breiman1984book} briefly entertained the idea of constructing Boolean combinations of single-variable splits and recognized the associated difficulty in split search because the combinations are too many to enumerate.  To our knowledge, the {\ttfamily bsnsing} package is the first decision tree package to implement Boolean combinations of splits in which the combinatorial difficulty is mitigated by an efficient search algorithm.


\section{Models and Methods} \label{sec:method}

\subsection{Preliminaries and the Framework Overview}
Let $\mathcal{X}$ denote the input space containing all possible input vectors $x$. Suppose that objects characterized by $x$ fall into $J$ classes and let $C$ be the set of classes, i.e., $C = \{1, \ldots, J\}$. A classifier is a rule that assigns a class membership in $C$ to every vector in $\mathcal{X}$. In other words, a classifier is a partition of $\mathcal{X}$ into $J$ disjoint subsets $A_1, \ldots, A_j$, $\mathcal{X} = \cup_j A_j$, such that for every $x \in A_j$ the predicted class is $j$. A classifier is constructed or trained by a \emph{learning sample} $\mathcal{L}$, which consists of input data on $n$ cases together with their actual class membership, i.e., $\mathcal{L} = \{(x_1, j_1), \ldots, (x_n, j_n)\}$, where $x_i \in \mathcal{X}$ and $j_i \in \{1, \ldots, J\}$, for $i = 1, \ldots, n$. At present, we consider the binary classification problem where $J = 2$ and call the two classes as being \emph{positive} and \emph{negative}, respectively, i.e., $C = \{\text{Positive}, \text{ Negative}\}$.

\begin{figure}[h!]
\begin{center}
\includegraphics[width=\linewidth]{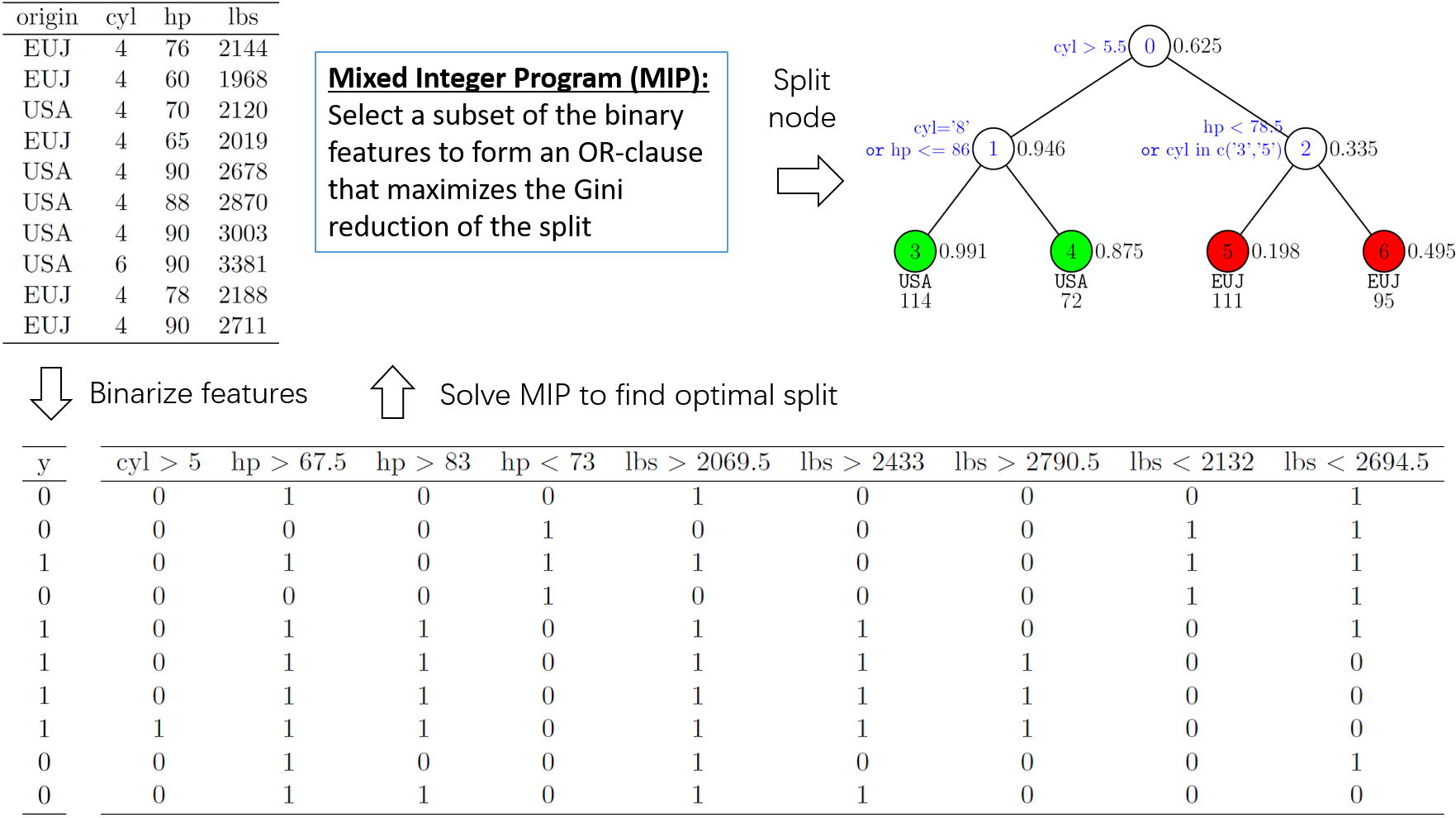}
\caption{Process flow of the {\ttfamily bsnsing} method.  First, a binary feature matrix $B$ is created based on the original input variables. Each feature represents a general question that demands a yes/no answer.  Next, a mixed-integer program (MIP) is solved to select the set of questions to form a boolean OR-clause that would maximize impurity reduction.  Finally, the tree node is split by the selected rule(s).  Detailed annotation of the {\ttfamily bsnsing} tree plot is given in the appendix.}
\label{fig:bsnsing_split_process}
\end{center}
\end{figure}

Given a learning sample $\mathcal{L}$ available at a tree node,  the algorithm takes two steps to split the node. First, all input variables are coded into binary features.  Each binary feature represents a question that demands a Yes/No answer based on the value of the original variable. We call this process \emph{binarization} of the input space.  
The outcome of this step includes (1) an $n$-by-$m$ matrix $B$ consisting of 0/1 entries, where $m$ is the total number of binary features created in the process,  and (2) the original binary response vector $y$ in the sample $\mathcal{L}$. 

The second step determines a boolean OR-clause to split the node.   Here,  a boolean OR-clause refers to a set of general questions joined by the logical OR operator,  e.g., $\{$Is Age $>$ 35 or Age $\le 28$  or BMI $\ge$ 30?$\}$.  If a case answers yes to any question in the clause, it will be classified as a positive case; otherwise, it will be classified as a negative case.  The selection of questions into the clause is the decision to be made here,  which will be formulated as a combinatorial optimization problem via mixed-integer programming.  Figure \ref{fig:bsnsing_split_process} demonstrates how {\ttfamily bsnsing} handles these steps.  

\subsection{Feature Binarization}
We very briefly outline the default feature binarization approach implemented in {\ttfamily bsnsing}.  It is not the emphasis of this paper and it is extensible by other developers. 
For a numeric variable, samples are sorted based on its value. If the two classes are perfectly separable, i.e., the minimum value of one class is greater than the maximum value of the other class, the split point is returned and both child nodes are marked as a leaf. Otherwise, the sorted list of samples is scanned twice, in the sequential and reverse order, to find ``greater-than'' and ``less-than'' type of split conditions, respectively. In this manner, the algorithm implements the subsumption principle as described in \cite{Rijnbeek:2010:FSA:1825171.1825175} to ensure that only potentially valuable split conditions are generated and ones that are not in the efficient frontier are left out.  Other approaches for binarizing numeric features, such as using the empirical quantiles as cut points, see, e.g., \cite{pmlr-v28-malioutov13}, are also worth implementing in future work. For a categorical variable of $L$ unique levels, the binarization process creates $L$ binary dummy variables when $L$ is below a threshold (default 30).  When $L$ is greater than the threshold,  value grouping is applied before creating dummy variables.  Finally,  binary features generated by other decision tree packages can be imported to {\ttfamily bsnsing} for optimal selection as well. 

\subsection{Mixed-Integer Program to Maximize Gini Reduction}
The Gini index,  developed by Italian statistician Corrado Gini in 1912,  is a measure of variability for categorical data.  It can be used to measure the impurity of a decision tree node.  In general,  the Gini index for a set of objects (e.g., cases contained in a tree node) that come from $J$ possible classes is given by $1-\sum_{j = 1}^J p_j^2$,  where $p_j$ is the relative frequency of the target class $j$,   $ j\in\{1,\ldots,J\}$,  in the node.  A split that produces a large reduction in Gini (i.e., $\Delta$Gini) is preferred.  The $\Delta$Gini splitting criterion was first proposed in \cite{breiman1984book}.  It is available in many decision tree codes such as  {\ttfamily rpart} and {\ttfamily tree} packages in {\ttfamily R} and the decision tree method in SAS Enterprise Miner.   
We will formulate the $\Delta$Gini criterion as a mixed-integer linear program of the splitting decision.  To our knowledge,  no prior work has done so. 

Let $P$ and $N$ denote the numbers of positive and negative cases,  respectively,  at the current (parent) node.  The Gini index of this node,  denoted by $G(\text{parent})$, is
\begin{align}
& G(\text{parent}) = 1-\left(\frac{P}{P+N}\right)^2 - \left(\frac{N}{P+N}\right)^2 = \frac{2P\cdot N}{(P+N)^2}
\end{align}
Suppose the node is split into two child nodes by a split rule,  that is,  cases that are ruled to be positive fall in the left node and cases that are ruled to be negative fall in the right node.  Let $TP$ and $FP$ denote the numbers of positive and negative cases,  respectively,  that fall in the left node,  and $TN$ and $FN$ denote the numbers of negative and positive cases,  respectively,  that fall in the right node,  then the Gini indexes of the left and right nodes are
\begin{align*}
& G(\text{left}) = \frac{2 \cdot TP \cdot FP}{(TP+FP)^2} \text{ and } G(\text{right}) = \frac{2 \cdot TN \cdot FN}{(TN+FN)^2}
\end{align*}

The $\Delta$Gini of the split is defined to be the Gini index of the parent node minus the weighted sum of the Gini indexes of the child nodes, whereas the weights are the squared proportions of cases that fall in each child node,  
\begin{align}
& \Delta G = G(\text{parent}) - \left(\frac{TP + FP}{P + N}\right)^2 \cdot G(\text{left}) - \left(\frac{TN + FN}{P + N}\right)^2 \cdot G(\text{right})  \label{eq:deltaGini}
\end{align}
Note that the squared proportions are used here, in stead of the proportions as originally proposed in \cite{breiman1984book}, for the ease of mathematical modeling. 
Note that the value of $\Delta G$ is not affected by the classification labels assigned to the child nodes.  For example,  if we were to classify the left node as negative and the right node as positive,  it would only cause a swap between $TP$ and $FN$ and a swap between $FP$ and $TN$ in the above equations,  which would not affect the value of $\Delta G$.  
Equation (\ref{eq:deltaGini}) can be simplified to $\Delta G =(2P\cdot N - 2 (TP \cdot FP + TN \cdot FN))/(P + N)^2$.  Since $P$ and $N$ are known values for the parent node irrespective of the split,  maximizing $\Delta G$ is equivalent to minimizing $TP \cdot FP + TN \cdot FN$,  which is in turn equivalent to minimizing 
\begin{align}
P \cdot FP + N \cdot FN - 2\cdot FN \cdot FP \label{eq:Gini_obj}
\end{align}
Here,  $FP$ and $FN$ are variables whose values depend on the split rule. 

\begin{table}
\centering 
\caption{Notation definition for the mathematical program.} \label{tb:mip_sym_def}
\begin{tabular}{l l}
\hline
Symbol & Meaning \\
\hline
$\mathcal{I}$ & $=\{1,\ldots, n\}$,  index set of cases \\
$\mathcal{P},  \, \mathcal{N}$ & index sets of Positive and Negative cases, respectively \\
$\mathcal{K}$ & $=\{1, \ldots, m\}$,  index set of questions \\
\hline
$B_{ik}$ & $=1$ if case $i$ answers Yes for question $k$,  0 otherwise \\
$y_i$ & $=1$ if case $i$ is positive; 0 otherwise \\
\hline
$w_k$ & $=1$ if question $k$ is selected into the split rule; 0 otherwise \\
$z_i$ & $=1$ if case $i$ is classified as Positive; 0 Negative \\
$\theta_{ij}$ & a binary variable for each pair of cases $i$ and $j$ \\
\hline
\end{tabular}
\end{table}

Mathematical symbols used in the MIP formulation is defined in Table \ref{tb:mip_sym_def}. 
Let the binary variable $w_k$ indicate whether question $k$ is selected (=1) or not (=0),  then the product $B_{ik} w_k$ equals 1 if both question $k$ is selected and case $i$ answers yes for the question.  When the selected questions form an OR-clause classification rule,  the case $i$ is classified as Positive ($z_i = 1$) if there exists a $k \in \mathcal{K}$ such that $B_{ik} w_k = 1$. Conversely,  case $i$ is classified as Negative ($z_i = 0$) if $B_{ik} w_k = 0$ for all $k \in \mathcal{K}$. 
This classification rule is expressed by the following linear constraints. 
\begin{align}
& & B_{ik} w_k \le z_i,  & & \forall i \in \mathcal{I},  \, \forall k \in \mathcal{K} \label{eq:wz1} \\
& & \sum_{k \in \mathcal{K}} B_{ik} w_k \ge z_i,  & & \forall i \in \mathcal{I} \label{eq:wz2}
\end{align}
We only need to enforce $0 \le z_i \le 1$,  for $i \in \mathcal{I}$ in the formulation,  since the integrality of variable $z_i$ is implied by the constraints (\ref{eq:wz1}) and (\ref{eq:wz2}), the fact that each $w_k$ is binary and the sense (i.e., minimization) of the optimization objective. 

Using variable $z_i$,  we can express the terms in (\ref{eq:Gini_obj}) as follows.
\begin{align*}
& P \cdot FP = P \cdot \sum_{j \in \mathcal{I}} (1 - y_j) z_j  = \sum_{i \in \mathcal{P}} \sum_{j \in \mathcal{N}} z_j \\
& N \cdot FN = N \cdot \sum_{i \in \mathcal{I}} y_i (1-z_i) = \sum_{j \in \mathcal{N}} \sum_{i \in \mathcal{P}} (1-z_i) \\
\begin{split}
FN \cdot FP & = \left(\sum_{i \in \mathcal{I}} y_i (1-z_i)\right) \left( \sum_{j \in \mathcal{I}} (1 - y_j) z_j \right) \\
& = \sum_{i, j \in \mathcal{I}} y_i (1-z_i) (1-y_j) z_j \\ 
& = \sum_{i \in \mathcal{P}} \sum_{j \in \mathcal{N}} (1-z_i) z_j 
\end{split}
\end{align*}
Minimizing (\ref{eq:Gini_obj}) would reward a greater value of $FN \cdot FP$,  therefore we can replace each term $(1-z_i) z_j$ in the last equation by a free variable $\theta_{ij}$,  defined for $i \in \mathcal{P}$ and $j \in \mathcal{N}$,  and impose the following constraints,
\begin{align}
& & \theta_{ij} \le 1-z_i,  & & \forall i \in \mathcal{P},\, \forall j \in \mathcal{N} \label{eq:theta1} \\
& & \theta_{ij} \le z_j,  & & \forall i \in \mathcal{P},\, \forall j \in \mathcal{N} \label{eq:theta2} 
\end{align}


Putting everything together,  the problem that optimizes Gini reduction is formulated as a mixed-integer program as follows.  Let us call it OPT-G.  

\noindent (OPT-G):
\begin{align}
& & & \text{Minimize} & & \sum_{i \in \mathcal{P}} \sum_{j \in \mathcal{N}} (1-z_i + z_j - 2\theta_{ij}) \label{eq:mip_obj} \\
& & & \text{s. t.} & & \text{(\ref{eq:wz1}), (\ref{eq:wz2}),  (\ref{eq:theta1}) and (\ref{eq:theta2})} \notag \\
& & &  & & w_k \in \{0, 1\},  & & \forall k\in \mathcal{K} \label{eq:mip_w}\\
& & & & & 0 \le z_i \le 1,  & & \forall i \in \mathcal{I} \label{eq:mip_z} \\
& & & & & \theta_{ij} \text{ free} & & \forall i \in \mathcal{P},  \, j \in \mathcal{N} \label{eq:mip_theta}
\end{align} 
For each pair of cases $i \in \mathcal{P}$ and $j \in \mathcal{N}$,  $\theta_{ij}$ equals 1 only when both cases are classified incorrectly.  When this happens,  the corresponding objective term $(1-z_i + z_j - 2\theta_{ij})$ will be the same value (zero) as when both cases are classified correctly.  This corroborates the theory that the $\Delta G$ split criterion is agnostic of the polarity of the labels assigned to the child nodes.  
Equation (\ref{eq:mip_z}) can be further reduced to $z_i \le 1$ for $i \in \mathcal{P}$ and $z_i \ge 0$ for $i \in \mathcal{N}$.  
Most MIP models in the OCT literature attempted to minimize misclassification (plus other terms to suppress overfitting).  The OPT-G model can be reduced to minimizing misclassification simply by removing the objective terms and constraints that involve $\theta$.  Specifically,  the following model (OPT-E) explicitly minimizes the number of misclassified cases from the split. 

\noindent (OPT-E):
\begin{align}
& & & \text{Minimize} & & \sum_{i \in \mathcal{P}} \sum_{j \in \mathcal{N}} (1-z_i + z_j) \label{eq:mip_obj_e} \\
& & & \text{s. t.} & & \text{(\ref{eq:wz1}), (\ref{eq:wz2}),  (\ref{eq:mip_w}) and (\ref{eq:mip_z})} \notag 
\end{align} 

The Gini-based splitting criterion is designed to purify the class composition in the child nodes.  Compared to error-based criterion,  it is conducive to more balanced child nodes.  This is demonstrated in the simple example in Figure \ref{fig:compare_gini_vs_error}.  Therefore,  in this paper we focus on analyzing the OPT-G model and its solution strategy.  Splits based on the OPT-E model is available in the {\ttfamily bsnsing} package through the option {\ttfamily opt.model = `error'}.  Users can choose Gurobi,  CPLEX and lpSolve to solve the MIP model by setting the {\ttfamily opt.solver} option in {\ttfamily bsnsing},  provided that the chosen solver and its R API package are installed.

\begin{figure}
\centering
    \begin{tikzpicture}
        \node[state, ellipse, minimum width=2cm, minimum height=1.3cm, align=center] at (2,3)            (a) {\large $\oplus$$\oplus$$\oplus$ \\\large $\ominus$$\ominus$$\ominus$};
        \node[state, ellipse, minimum width=2cm, minimum height=1.3cm, align=center] at (0,0)            (a1) {\large $\oplus$};
        \node[state, ellipse, minimum width=2cm, minimum height=1.3cm, align=center] at (4,0)            (a2) {\large $\oplus$$\oplus$ \\ \large$\ominus$$\ominus$$\ominus$};
        \node[state, ellipse, minimum width=2cm, minimum height=1.3cm, align=center] at (9,3)            (b) {\large $\oplus$$\oplus$$\oplus$ \\ \large $\ominus$$\ominus$$\ominus$};
        \node[state, ellipse, minimum width=2cm, minimum height=1.3cm, align=center] at (7,0)            (b1) {\large $\oplus$$\oplus$$\ominus$};
        \node[state, ellipse, minimum width=2cm, minimum height=1.3cm, align=center] at (11,0)            (b2) {\large $\ominus$$\ominus$$\oplus$};

        \draw[every loop]
            (a) edge[auto=right] node {\large +} (a1)
            (a) edge[auto=left] node {\large -} (a2)
            (b) edge[auto=right] node {\large +} (b1)
            (b) edge[auto=left] node {\large -} (b2);
    \end{tikzpicture}
\caption{The two ways of splitting the parent node gives the same number of misclassifications,  hence they are indistinguishable under OPT-E.  In contrast,  OPT-G favors the one leading to more balanced child nodes. } \label{fig:compare_gini_vs_error}
\end{figure}
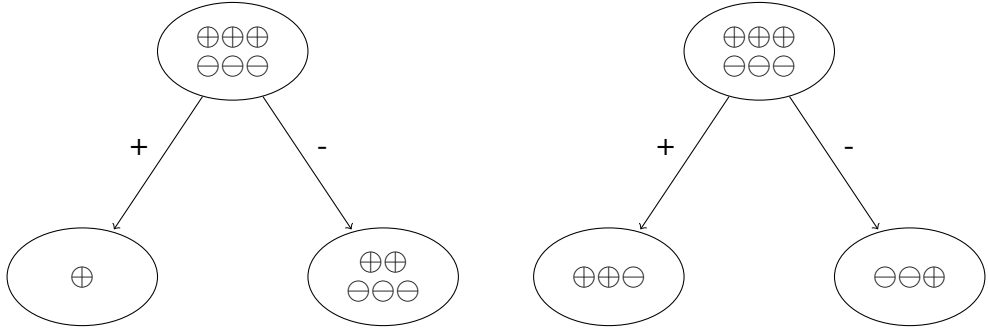



In the tree-building process, the instance size of the OPT-G model is largest at the root node,  and decreases exponentially as the process moves down the tree branches.  For large training data sets,  the root node model can become computationally prohibitive for even the commercial solvers such as CPLEX and Gurobi.  In the next section, we propose an implicit enumeration approach for solving OPT-G that scales better than commercial solver codes in practical settings.

\subsection{Implicit enumeration (ENUM) algorithm for solving OPT-G}
Each candidate solution to OPT-G can be denoted by its corresponding index set $\mathcal{S}$ of the selected questions,  i.e.,  $\mathcal{S} := \{k \in \mathcal{K} \mid w_k = 1\}$.   At a solution $\mathcal{S}$,  let us denote the false positive and false negative counts at the solution by $FP_\mathcal{S}$ and $FN_\mathcal{S}$, respectively,  
and define $\nu(\mathcal{S})$ to be the objective value at the solution,  that is,
\begin{align}
& \nu(\mathcal{S}) := P \cdot FP_{\mathcal{S}} + N \cdot FN_{\mathcal{S}} -2\cdot FP_{\mathcal{S}} \cdot FN_{\mathcal{S}}  \label{eq:def_v} 
\end{align}

We know that starting at any $\mathcal{S}$,  selecting more questions into the solution (i.e., enlarging $\mathcal{S}$)  would only encourage extra cases to fall in the left node (i.e., answer ``yes'' to the OR-clause), and hence, would cause $FP$ to either stay at the same value or increase, and would cause $FN$ to stay at the same value or decrease. 
In other words,  for any $\mathcal{S}_+ \supset \mathcal{S}$,  we have $FP_{\mathcal{S}_+} \ge FP_{\mathcal{S}}$ and $FN_{\mathcal{S}_+} \le FN_{\mathcal{S}}$.
Leveraging this property,  we can derive a lower bound on the objective value for any possible solution that ``branches out'' from a given solution $\mathcal{S}$.  

\begin{proposition} \label{prop1}
For any superset of $\mathcal{S}$,  denoted by $\mathcal{S}_+$,  the following inequalities hold.
\begin{align*}
\nu(\mathcal{S}_+) \ge \begin{cases}
\nu(\mathcal{S}),  & \text{if } FN_\mathcal{S} < P/2 \text{ and } FP_\mathcal{S} > N/2 \\
P \cdot FP_{\mathcal{S}},  & \text{if } FN_\mathcal{S} < P/2 \text{ and } FP_\mathcal{S} \le N/2 \\
N \cdot (P - FN_{\mathcal{S}}),  & \text{if } FN_\mathcal{S} \ge P/2 \text{ and } FP_\mathcal{S} > N/2 \\
0, & \text{if } FN_\mathcal{S} \ge P/2 \text{ and } FP_\mathcal{S} \le N/2 
\end{cases}
\end{align*}
\end{proposition}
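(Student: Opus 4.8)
The plan is to regard $\nu$ as the bilinear function of the two counts $FP$ and $FN$ that already appears in (\ref{eq:def_v}), and to minimize it over the range to which those counts are confined once we pass to a superset. The monotonicity property recorded just before the proposition gives $FP_{\mathcal{S}_+}\ge FP_{\mathcal{S}}$ and $FN_{\mathcal{S}_+}\le FN_{\mathcal{S}}$; together with the trivial bounds $FP_{\mathcal{S}_+}\le N$ and $FN_{\mathcal{S}_+}\ge 0$ this shows that the pair $(FP_{\mathcal{S}_+},FN_{\mathcal{S}_+})$ lies in the rectangle $R:=[FP_{\mathcal{S}},N]\times[0,FN_{\mathcal{S}}]$. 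Hence it suffices to bound below the relaxed quantity $\phi(f,g):=Pf+Ng-2fg$ over all real $(f,g)\in R$, since $\nu(\mathcal{S}_+)=\phi(FP_{\mathcal{S}_+},FN_{\mathcal{S}_+})\ge\min_{(f,g)\in R}\phi(f,g)$; integrality of the counts is never needed, it would only improve the bound.

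Next I would minimize $\phi$ over $R$ by nested one‑dimensional minimization, using that $\phi$ is affine in each variable when the other is fixed. Writing $\phi(f,g)=f(P-2g)+Ng$, for fixed $g$ the minimizer over $f\in[FP_{\mathcal{S}},N]$ is $f=FP_{\mathcal{S}}$ when $g<P/2$ (positive slope) and $f=N$ when $g>P/2$ (negative slope). This is precisely why the statement branches on whether $FN_{\mathcal{S}}<P/2$ or $FN_{\mathcal{S}}\ge P/2$. In the first regime the whole inner interval $[0,FN_{\mathcal{S}}]$ lies below $P/2$, so the inner minimum is always attained at $f=FP_{\mathcal{S}}$ and $\phi$ reduces to the affine function $g\mapsto P\cdot FP_{\mathcal{S}}+g(N-2FP_{\mathcal{S}})$ of $g$; the sign of its slope, governed by whether $FP_{\mathcal{S}}\le N/2$ or $FP_{\mathcal{S}}>N/2$, puts the outer minimum at $g=0$ (value $P\cdot FP_{\mathcal{S}}$) or at $g=FN_{\mathcal{S}}$ (value $\nu(\mathcal{S})$). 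This yields the first two cases.

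For the regime $FN_{\mathcal{S}}\ge P/2$ I would split the inner interval at the breakpoint $g=P/2$: on $[0,P/2]$ the reduced function is $P\cdot FP_{\mathcal{S}}+g(N-2FP_{\mathcal{S}})$ as before, while on $[P/2,FN_{\mathcal{S}}]$ the inner minimizer is $f=N$ and the reduced function becomes the decreasing function $g\mapsto N(P-g)$. The candidate minimum values are therefore $P\cdot FP_{\mathcal{S}}$ on the first piece (or $PN/2$, attained at $g=P/2$, when $FP_{\mathcal{S}}>N/2$) and $N(P-FN_{\mathcal{S}})$ on the second piece; comparing them with the aid of $FN_{\mathcal{S}}\ge P/2$ singles out $N(P-FN_{\mathcal{S}})$ as the minimum when $FP_{\mathcal{S}}>N/2$, which is the third case. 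When $FP_{\mathcal{S}}\le N/2$ one does not even need this comparison: rewriting $\phi(f,g)=FP(P-FN)+FN(N-FP)\ge 0$ (using $FN\le P$ and $FP\le N$) already gives the deliberately loose bound $0$ of the fourth case.

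The only mildly delicate point is the bookkeeping in the $FN_{\mathcal{S}}\ge P/2$ regime, where the sign of the $f$‑slope flips inside the inner interval and one must keep track of and compare the two piecewise candidates; every other step is a routine sign check on an affine function. I would close by noting that cases one through three are in fact tight — the stated lower bound equals $\min_{(f,g)\in R}\phi(f,g)$ and is attained at a corner of $R$ — whereas case four merely uses the universal nonnegativity of $\nu$.
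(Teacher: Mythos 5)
Your proof is correct. The underlying mechanism is the same one the paper uses -- $\nu$ is affine in each of $FP$ and $FN$ separately, so you push each count to the appropriate endpoint of its feasible interval according to the sign of its coefficient -- but you package it differently: the paper writes four separate ad hoc inequality chains (one per case, each substituting $FN_{\mathcal{S}_+}\to FN_{\mathcal{S}}$ or $FN_{\mathcal{S}_+}\to 0$ and $FP_{\mathcal{S}_+}\to FP_{\mathcal{S}}$ or $FP_{\mathcal{S}_+}\to N$ in a hand-chosen order), whereas you pose a single continuous relaxation $\min_{(f,g)\in R}\phi(f,g)$ over the box $R=[FP_{\mathcal{S}},N]\times[0,FN_{\mathcal{S}}]$ and read all four cases off the nested one-dimensional minimizations. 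Your framing buys three things the paper's does not make explicit: a uniform derivation in which the case split on $FN_{\mathcal{S}}\lessgtr P/2$ and $FP_{\mathcal{S}}\lessgtr N/2$ emerges from the sign analysis rather than being assumed up front; a transparent treatment of the sign flip of the $f$-slope inside the inner interval in the third regime (the paper's chain there silently jumps to the corner $f=N$ via $FP_{\mathcal{S}_+}\le N$); and the observation that the first three bounds are tight over the relaxation, attained at corners of $R$. Your rewriting $\phi(f,g)=f(P-g)+g(N-f)\ge 0$ for the fourth case is also a cleaner justification than the paper's bare assertion that $\nu\ge 0$. One small bookkeeping point: your argument uses the bounds $f\le N$ and $g\le P$ (i.e., $FP\le N$ and $FN\le P$), which hold because false positives are a subset of the $N$ negatives and false negatives a subset of the $P$ positives; this is worth stating since it is what makes both the corner $f=N$ in case three and the nonnegativity in case four legitimate.
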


\begin{proof}
In the case $FN_\mathcal{S} < P/2$ and $FP_\mathcal{S} > N/2$,  we have $N - 2\cdot FP_{\mathcal{S}_+} \le N - 2\cdot FP_{\mathcal{S}} < 0$ and $P - 2\cdot FN_{\mathcal{S}_+} \ge P - 2\cdot FN_{\mathcal{S}} > 0$,  therefore,
\begin{align*}
\begin{split} \nu(\mathcal{S}_+)  & = P \cdot FP_{\mathcal{S}_+} + (N-2\cdot FP_{\mathcal{S}_+}) \cdot FN_{\mathcal{S}_+} \\
& \ge P \cdot FP_{\mathcal{S}_+} + (N-2\cdot FP_{\mathcal{S}_+}) \cdot FN_{\mathcal{S}} \\
& = (P - 2\cdot FN_{\mathcal{S}_+}) \cdot FP_{\mathcal{S}_+} + N \cdot FN_{\mathcal{S}} \\
& \ge (P - 2\cdot FN_{\mathcal{S}}) \cdot FP_{\mathcal{S}} + N \cdot FN_{\mathcal{S}} \\
& = \nu(\mathcal{S})
\end{split}
\end{align*}
In the case $FN_\mathcal{S} < P/2 \text{ and } FP_\mathcal{S} \le N/2$,  we have $P - 2\cdot FN_{\mathcal{S}_+} \ge P - 2\cdot FN_{\mathcal{S}} > 0$ and $N - 2\cdot FP_{\mathcal{S}} \ge 0$,  therefore,
\begin{align*}
\begin{split} \nu(\mathcal{S}_+) & = N \cdot FN_{\mathcal{S}_+} + (P - 2\cdot FN_{\mathcal{S}_+}) \cdot FP_{\mathcal{S}_+} \\
& \ge N \cdot FN_{\mathcal{S}_+} + (P - 2\cdot FN_{\mathcal{S}_+}) \cdot FP_{\mathcal{S}} \\
& = (N-2\cdot FP_{\mathcal{S}}) \cdot FN_{\mathcal{S}_+}  + P \cdot FP_{\mathcal{S}} \\
& \ge P \cdot FP_{\mathcal{S}}
\end{split}
\end{align*}
In the case $FN_\mathcal{S} \ge P/2 \text{ and } FP_\mathcal{S} > N/2$,  we have $N - 2\cdot FP_{\mathcal{S}_+} \le N - 2\cdot FP_{\mathcal{S}} < 0$ and $P - 2\cdot FN_{\mathcal{S}} < 0$,  therefore,
\begin{align*}
\begin{split} \nu(\mathcal{S}_+) & = P \cdot FP_{\mathcal{S}_+} + (N-2\cdot FP_{\mathcal{S}_+}) \cdot FN_{\mathcal{S}_+} \\
& \ge P \cdot FP_{\mathcal{S}_+} + (N-2\cdot FP_{\mathcal{S}_+}) \cdot FN_{\mathcal{S}} \\
& = (P - 2\cdot FN_{\mathcal{S}}) \cdot FP_{\mathcal{S}_+} + N \cdot FN_{\mathcal{S}} \\
& \ge (P - 2\cdot FN_{\mathcal{S}_+}) \cdot N + N \cdot FN_{\mathcal{S}} \\
& = (P - FN_{\mathcal{S}}) \cdot N
\end{split}
\end{align*}
In the last case where $FN_\mathcal{S} \ge P/2 \text{ and } FP_\mathcal{S} \le N/2$,  there is insufficient information to derive a nontrivial lower bound for $\nu(\mathcal{S}_+)$ while $\nu(\mathcal{S}_+) \ge 0$ always holds. 
\end{proof}

Let us denote the lower bound for $\nu(\mathcal{S}_+)$ by $\tau(\mathcal{S})$ to emphasize its sole dependence on the current solution $\mathcal{S}$,  and assign its value according to Proposition \ref{prop1}. 
In the search for the optimal solution,  whenever $\tau(\mathcal{S})$ is greater than the best objective value found so far,  any solution that would result from enlarging $\mathcal{S}$ can be eliminated.  
The algorithm is outlined as follows.  We start with evaluating all single-variable split rules,  i.e., $\mathcal{S}_k = \{k\}$, for $k \in \mathcal{K}$,  by calculating their $\nu(\mathcal{S}_k)$ and $\tau(\mathcal{S}_k)$.  These are the ``root nodes'' in the search tree \footnote{The phrase ``search tree'' is an analogy - in implementation a queue is used for storing yet-to-explore solutions. }. Let $\nu^\text{best}$ be the smallest objective values encountered so far.  We can eliminate those nodes $k$ having $\tau(\mathcal{S}_k) > \nu^\text{best}$,  and we can terminate a node (i.e., making it a leaf in the search tree) when $\tau(\mathcal{S}_k) = \nu(\mathcal{S}_k)$.  For each remaining root node $k$,  we evaluate all two-variable split rules branched from it in such a way that the added variable's index is greater than $k$ (to avoid redundant evaluations),  and update the $\nu^\text{best}$ and eliminate unpromising branches on the fly.  For instance,  if $\tau(\{1,2\}) > \nu^\text{best}$, then the candidate solution $\{1,2\} \cup K$ for each $K \subset 2^{\{3, \ldots, m\}}$ can be eliminated.  The search proceeds until all possibilities are examined,  at which point the $\nu^\text{best}$ is the optimal solution to OPT-G.  This algorithm is implemented in the {\ttfamily bslearn} function in the R source code under the ``enum'' solver option,  and we call it the ENUM algorithm in the rest of this paper. 
\begin{figure}
\centering
\begin{tikzpicture}[sibling distance=5em,
  every node/.style = {align=center, font = \normalsize}]]
  \node (1) {1}
    child { node {2}
        child { node {3}
            child { node {4}}}
        child { node {4}}}
    child { node {3}
            child { node {4}}}
    child { node {4}};
  \begin{scope}[xshift=4.5cm]
      \node {2}
        child { node {3}
            child { node {4}}}
        child { node {4}};
  \end{scope}
  \begin{scope}[xshift=7cm]
      \node {3}
        child { node {4}};
  \end{scope}
  \begin{scope}[xshift=9cm]
    \node (4) {4};
  \end{scope}
\end{tikzpicture}
\caption{Enumeration of all possible solutions in a solution space of four candidate rules indexed by $\{1, 2, 3, 4\}$.  Each node,  along with the top-down path leading to it, represents a unique subset of the solution space.  Candidate solutions are evaluated in the breadth-first order.  Proposition \ref{prop1} enables opportunities to prune the search tree branches. } \label{fig:ENUM_search_demo} 
\end{figure}
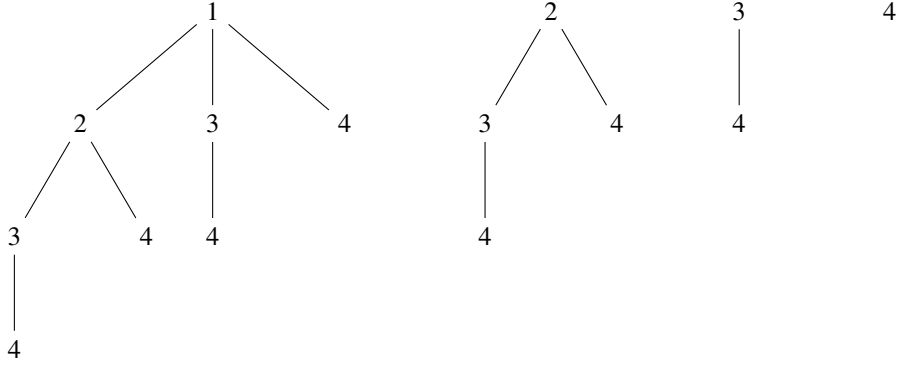

The ENUM algorithm, if carried out to completion,  can guarantee to return a solution having the smallest objective value.  
In essence,  the sequential evaluation of candidate solutions reduces the search space from $\mathcal{K} \times \mathcal{P} \times \mathcal{N}$ to $2^\mathcal{K}$.  Since the cost of evaluating a candidate solution ramps up slowly with the size of $\mathcal{I}$ (i.e., $\mathcal{P} \cup \mathcal{N}$) thanks to efficient vector operations,  the ENUM can find the optimal solution faster than the typical branch-and-bound method of MIP solvers,  when $\mathcal{I}$ is large and $\mathcal{K}$ is relatively small.  

Compared to using an MIP solver,  ENUM boasts the following advantages: 
(1) it scales betters in terms of memory cost because no branch-and-bound tree is maintained; 
(2) the search process can be parallelized (though it is a bit tricky to implement in R since R is intrinsically single-threaded); (3) the search is breadth first,  meaning that split rules having fewer clauses are evaluated before rules having more clauses get evaluated.  Therefore,  if any time the search is terminated prematurely (e.g., due to time limit),  a simplest possible best found rule can still be returned; in addition,  if the optimal solution is not unique,  one having the fewest clauses will be returned.  


\subsection{Complexity regulation constraints}
When OPT-G is used to split the nodes,  the classification tree can be grown until every node is pure (i.e., containing observations of the same class).  Such a tree,  called a maximal tree,  does not generalize well on new data.  Additional constraints can be added to OPT-G to regulate the complexity of the tree to curb overfitting.  In the {\ttfamily bsnsing} package, we leverage two types of constraints as follows. 
\begin{align}
& \sum_{k \in \mathcal{K}} w_k \le \text{MaxRules} \label{eq:maxrules} \\
& \sum_{i \in \mathcal{I}} z_i \ge \text{MinNodeSize} \label{eq:nodesize1}\\
& \sum_{i \in \mathcal{I}} (1 - z_i) \ge \text{MinNodeSize} \label{eq:nodesize2}
\end{align}

The right-hand sides of these inequalities are control parameters.  Constraint (\ref{eq:maxrules}) limits the number of questions to enter a split rule.  This constraint is quite necessary in practice especially when the tree is to be used for prediction,  because devoid of this constraint each split optimization step would amount to a maximal overfitting of the data available in the present node - finding a single composite split rule to maximally reduce the Gini.  In addition,  this constraint also directly reduces the solution space thus its presence expedites the ENUM search.  Specifically,  the number of objective function evaluations in the worst case (assuming no pruning and early termination opportunities exist in the search process) would reduce to $\sum_{i = 1}^\text{MaxRules}  \binom{m}{i}$ as compared to $2^m$ when the constraint was not present.  For instance,  for a pool of $m = 98$ candidate questions,  constraint (\ref{eq:maxrules}) with $\text{MaxRules} =3$ would reduce the worst-case objective function evaluations to $\binom{98}{1} + \binom{98}{2} + \binom{98}{3} = 156947$ from the theoretical worst worst case of $2^{98} = 3.17 \times 10^{29}$.  A demonstration is presented in Section \ref{sec:evaluation}.  

Constraints (\ref{eq:nodesize1}) and (\ref{eq:nodesize2}) require that each child node from a split must contain at least $\text{MinNodeSize}$ (a positive number) observations.  They are effective at limiting the tree depth as well as generating well-populated leaf nodes.  In the {\ttfamily bsnsing} package, the default value for \text{MaxRules} is 2,  and MinNodeSize is by default set equal to the square root of $n$,  the number of training examples. 

In certain use cases, it is customary to expect the child nodes of a split to bear different majority classes, i.e.,  to require $TP/(TP+FP) \ge 0.5$ and $FN/(FN+TN) \le 0.5$.  
This requirement can be translated to the following linear constraint for the OPT-G model. 
\begin{align}
& \sum_{i \in \mathcal{P}} z_i - \sum_{j \in \mathcal{N}} z_j \ge \max\{0, 2\cdot P - n\} \label{eq:gender} 
\end{align}
To include this constraint in the ENUM algorithm,  we can simply discard any candidate solution that violates it in the search process.  We comment that this constraint is mainly for the convenience of tree interpretation, and would lead to an over-regulation (i.e., creating unnecessary bias) to the tree model.  Specifically,  its inclusion in the model tends to produce simple and shallow trees that lack fidelity in discriminating new cases.  Therefore, we choose not to enable it by default in the {\ttfamily bsnsing} package.  To enable the constraint,  the user can explicitly set the parameter {\ttfamily no.same.gender.children} to True. 

Other hyperparameters used in the {\ttfamily bsnsing} function include (1) the bin size ({\ttfamily bin.size}), which specifies the minimum number of observations that must satisfy a candidate binary question for the question to enter the pool; (2) the stop probability ({\ttfamily stop.prob}),  a node purity threshold in terms of the proportion of the majority class in the node which, if exceeded,  the node will not be further split; (3) the maximum number of segments the range of a numeric variable ({\ttfamily nseg.numeric}) is divided into by inequalities of the same direction.  These parameters do not directly affect the split rule optimization, but they affect the overall efficiency of the tree building process and the final performance of the tree.  Apart from generating candidate binary questions internally, the {\ttfamily bsnsing} function is also able to import split questions generated by other decision tree packages (currently including C50,  tree,  party and rpart packages) into its own pool for optimal selection.  This option is enabled by the parameter {\ttfamily import.external}.  For a complete list of control parameters,  users could consult the help document by typing ``{\ttfamily ?bscontrol}'' in R.

\section{Evaluation} \label{sec:evaluation}
The models and algorithms developed in this paper are implemented in the open-source R package named {\ttfamily bsnsing}.  The source code is hosted at \href{github.com/profyliu/bsnsing}{github.com/profyliu/bsnsing}.   

In this section, we demonstrate the computational efficiency of solving OPT-G using different algorithms,  compare the performance of {\ttfamily bsnsing} against several other decision tree packages available in R, and showcase the basic usage of the {\ttfamily bsnsing} library via some code samples.  
The experiments were performed in RStudio (R version 3.6.2) on a MacBook Pro with Intel Core i9 (8 cores) processor and 16GB RAM.

\subsection{Computational efficiency of solving OPT-G}
We perform two sets of experiments to demonstrate the superiority of the ENUM algorithm over the Gurobi solver for solving OPT-G. In the first set of experiments, we generate OPT-G instances of different sizes based on the seismic data set from the UCI machine learning repository \citep{Dua:2019} and show how each method scales as data size increases; in the second set of experiments, we contrast the solutions by the two methods on more classification data sets adopted from the literature, to further solidify our conclusion.  

\subsubsection{Experiments on the seismic data set.}
The seismic data set has 1690 observations and 18 input variables, among which 4 are categorical variables and 14 are numeric variables.  The output is a categorical variable with two levels,  so it is a binary classification problem.  We first binarize all inputs to create a binary matrix $B$ (using the {\ttfamily binarize} function in the {\ttfamily bsnsing} package),  essentially substituting one or multiple binary features for each of the original input variables.  The matrix $B$ has $1690$ rows and $100$ columns. 
We experiment two scenario factors that are critical to the computational efficiency,  the number of training cases $n$ and  the maximum number of rules,  i.e., $\text{MaxRules}$,  allowed in the solution.  We randomly sample $n$ rows from $B$ and join them with the corresponding response variables to form sets of training data with varying sizes.   For each combination of {\ttfamily max.rules} in $\{1,2,3,4\}$ and $n$ in $\{200, 400, \ldots, 1600\}$,  we solve the OPT-G model using both the Gurobi solver and the ENUM algorithm. For example, suppose the selected rows of matrix $B$ is stored in matrix {\ttfamily bx} and the binary target vector is {\ttfamily y},  the following R code was used to learn the optimal split rule with MaxRules = 3 using the Gurobi solver: 

\noindent {\ttfamily
res <- bslearn(bx, y, bscontrol(opt.solver = `gurobi', solver.timelimit = 7200, max.rules = 3, node.size = 1))
}

\noindent Note that the {\ttfamily node.size} option was set to 1 to relax the constraints (\ref{eq:nodesize1}) and (\ref{eq:nodesize2}) that were irrelevant to this experiment. For the ENUM algorithm, the {\ttfamily opt.solver} option was set to {\ttfamily `enum\_c'}\footnote{Setting {\ttfamily opt.solver=`enum\_c'} invokes the compiled code (written in C) implementing the ENUM algorithm, as opposed to using the plain R implementation which can be invoked by setting {\ttfamily opt.solver = `enum'}. } . 

The results are summarized in Table \ref{tb:ENUM_neval}. As expected, in each case both methods were able to find the same optimal objective value, reported in the column Objval. For the same instance, the Objval was non-increasing with the increase in max.rules, which also matches our expectation. 

The next three columns of Table \ref{tb:ENUM_neval} list the computing time in seconds. The Gurobi solver automatically exploited multiple CPU cores available in the computer (which had 8 physical cores),  so we report both the CPU time (actual computing resource usage) in column Gurobi and the elapsed time (wall time as experienced by the user) in column Gurobi.E. The ENUM algorithm used only one CPU core, thus only the elapsed time is reported. Clearly, ENUM is the incontestable winner, running at least two orders of magnitude faster than Gurobi in elapsed time. 


For the ENUM algorithm, the number of objective function evaluations in the worst case is $\sum_{i=1}^\text{max.rules} \binom{100}{i}$.  However,  the actual number of evaluations was significantly fewer than the theoretical worst case.  The actual numbers of objective function evaluations,  as well as their percentage of the theoretical worst case,  are listed in columns Obj.Evals and Pct of All Feasible, respectively.  For example,  when $n = 200$ and {\ttfamily max.rule} = 1,  ENUM evaluated 100 candidate solutions to find the optimal one,  and this number is 100\% of all candidate solutions in the search space.  We can see that the percentage dropped significantly (hence substantial savings in computing accrued) with the increase in {\ttfamily max.rules}.  The savings are attributed to the bounding and early termination strategy of Proposition \ref{prop1}.  Similar experiments on other data sets and other solvers (i.e.,  CPLEX and lpSolve) would reveal the same insight, so we forgo repeated experiments on those scenarios.



\subsubsection{Experiments on selected DL8.5 datasets.}
In validating the DL8.5 algorithm and demonstrating its superiority over the BinOCT algorithm \citep{VerwerZhang2019}, \cite{Aglin_Nijssen_Schaus_2020} employed 24 binary classification data sets, all consisting of pure binary features. 
These data sets can directly form instances of OPT-G without the need for feature binarization. 
We selected 13 from the 24 data sets by the criterion $n \le 1000$ and $p \le 200$, and compared Gurobi and ENUM solutions to OPT-G for the root node split rule identification on these data sets. The reason for the above selection criterion is that Gurobi cannot solve the larger cases in a tolerable amount of time, i.e., two hours.  The results are presented in Table \ref{tb:gurobi_enum_dl85}.  The problem size is noted by $n$ and $m$ (for these cases, $p = m$). Sharp contrasts in solution time - up to a ratio of $10^6$ - between the two methods persisted throughout all cases. In addition, for the australian-credit and tic-tac-toe cases, Gurobi failed to terminate within the 2-hour time limit, ending with a suboptimal solution in the latter case. In comparison, ENUM was able to find the optimal solutions consistently in less than 0.1 second. In cases where Gurobi succeeded, both methods returned the same optimal rule. These experiments serve to solidify our conclusion that ENUM should be the solver of choice when the {\ttfamily bsnsing} package is employed in practice.

\begin{table}[!htb]
\centering
\small
\caption{Computational costs and scalability of ENUM. } \label{tb:ENUM_neval}
\begin{tabular}{c c r r r r r r}
\hline
Obs.  & max.rules & Objval & Gurobi & Gurobi.E & ENUM & Obj.  Evals & Pct of All Feasible \\
\hline 
\multirow{4}{*}{200} & 1 & 512 & 5.6 & 2.8 & 0 & 100 & 100\% \\
 & 2 & 492 & 14.5 & 2.9 & 0 & 2959 & 58.6\% \\
 & 3 & 420 & 20.4 & 3.8 & 0.1 & 36998 & 22.2\% \\
 & 4 & 378 & 20 & 3.7 & 0.5 & 249951 & 6.1\% \\
\hline 
\multirow{4}{*}{400} & 1 & 2324 & 14.5 & 6.8 & 0 & 100 & 100\% \\
 & 2 & 2046 & 97.7 & 14.9 & 0 & 3163 & 62.6\% \\
 & 3 & 1974 & 188.7 & 27.6 & 0.2 & 46270 & 27.7\% \\
 & 4 & 1974 & 93.3 & 14.8 & 2 & 406135 & 9.9\% \\
\hline 
\multirow{4}{*}{600} & 1 & 5044 & 48.5 & 19.7 & 0 & 100 & 100\% \\
 & 2 & 4710 & 261.1 & 38.6 & 0 & 3332 & 66\% \\
 & 3 & 4610 & 421.3 & 59.2 & 0.3 & 49059 & 29.4\% \\
 & 4 & 4610 & 359.4 & 50.5 & 3.8 & 445965 & 10.9\% \\
\hline 
\multirow{4}{*}{800} & 1 & 8494 & 117.6 & 43.9 & 0 & 100 & 100\% \\
 & 2 & 8153 & 796.7 & 109.2 & 0.1 & 3308 & 65.5\% \\
 & 3 & 7933 & 1447.3 & 191.8 & 0.6 & 51389 & 30.8\% \\
 & 4 & 7802 & 787.1 & 108.2 & 5.4 & 463565 & 11.3\% \\
\hline 
\multirow{4}{*}{1000} & 1 & 12410 & 58.5 & 34.2 & 0 & 100 & 100\% \\
 & 2 & 11946 & 1359.8 & 188.2 & 0.1 & 3241 & 64.2\% \\
 & 3 & 11652 & 1727.2 & 233.7 & 0.7 & 50300 & 30.2\% \\
 & 4 & 11382 & 2498.2 & 327 & 6.6 & 442363 & 10.8\% \\
\hline 
\multirow{4}{*}{1200} & 1 & 17027 & 842.7 & 433.8 & 0 & 100 & 100\% \\
 & 2 & 15840 & 2370.9 & 316.3 & 0.1 & 3282 & 65\% \\
 & 3 & 15457 & 3094.5 & 409.8 & 0.9 & 54261 & 32.5\% \\
 & 4 & 15316 & 4143.4 & 543.3 & 9 & 535880 & 13.1\% \\
\hline 
\multirow{4}{*}{1400} & 1 & 25354 & 625.7 & 151.4 & 0 & 100 & 100\% \\
 & 2 & 23302 & 3299.4 & 442.6 & 0.1 & 3373 & 66.8\% \\
 & 3 & 22794 & 4856 & 639.5 & 1 & 53873 & 32.3\% \\
 & 4 & 22578 & 6905.6 & 895.9 & 10.1 & 540219 & 13.2\% \\
\hline 
\multirow{4}{*}{1600} & 1 & 34664 & 558.9 & 293.4 & 0.1 & 100 & 100\% \\
 & 2 & 32460 & 19969.1 & 2572.5 & 0.1 & 3282 & 65\% \\
 & 3 & 31887 & 21834.1 & 2804.5 & 1.2 & 54642 & 32.8\% \\
 & 4 & 31737 & 33050.8 & 4212.2 & 11.2 & 551884 & 13.5\% \\
 \hline
\end{tabular}
\end{table}

\begin{table}
\centering
\small
\caption{Comparison between Gurobi and ENUM for root node split on DL85 datasets with {\ttfamily max.rules = 2}.} \label{tb:gurobi_enum_dl85}
\begin{tabular}{l r r@{\hskip 0.3in} r r r r c r}
\hline
 & $n$ & $m$ & Method & CPU & Elapsed & Objval & Rule & nEval \\
\hline
\multirow{2}{*}{anneal} & \multirow{2}{*}{812} & \multirow{2}{*}{93} & ENUM & 0.045 & 0.083 & 47750 & V60 $|$ V67 & 2204 \\
 & & & Gurobi & 9005.889 & 1181.632 & 47750 & V60 $|$ V67 & 0 \\
\hline 
\multirow{2}{*}{audiology} & \multirow{2}{*}{216} & \multirow{2}{*}{148} & ENUM & 0.023 & 0.027 & 885 & V2 $|$ V32 & 5768 \\
 & & & Gurobi & 63.565 & 10.994 & 885 & V2 $|$ V32 & 0 \\
\hline 
\multirow{2}{*}{australian-credit} & \multirow{2}{*}{653} & \multirow{2}{*}{125} & ENUM & 0.057 & 0.059 & 24456 & V34 $|$ V76 & 4493 \\
 & & & Gurobi & 56229.196 & 7204.716 & 24456 & V38 $|$ V76 & 0 \\
\hline 
\multirow{2}{*}{breast-wisconsin} & \multirow{2}{*}{683} & \multirow{2}{*}{120} & ENUM & 0.051 & 0.053 & 9039 & V21 $|$ V77 & 3427 \\
 & & & Gurobi & 5276.419 & 691.689 & 9039 & V21 $|$ V77 & 0 \\
\hline 
\multirow{2}{*}{diabetes} & \multirow{2}{*}{768} & \multirow{2}{*}{112} & ENUM & 0.063 & 0.063 & 53312 & V20 $|$ V76 & 3180 \\
 & & & Gurobi & 55471.42 & 7205.7 & 53312 & V20 $|$ V76 & 0 \\
\hline 
\multirow{2}{*}{heart-cleveland} & \multirow{2}{*}{296} & \multirow{2}{*}{95} & ENUM & 0.023 & 0.025 & 7460 & V91 $|$ V96 & 2483 \\
 & & & Gurobi & 3190.064 & 418.003 & 7460 & V91 $|$ V96 & 0 \\
\hline 
\multirow{2}{*}{hepatitis} & \multirow{2}{*}{137} & \multirow{2}{*}{68} & ENUM & 0.006 & 0.007 & 876 & V37 $|$ V51 & 1183 \\
 & & & Gurobi & 109.495 & 15.128 & 876 & V37 $|$ V51 & 0 \\
\hline 
\multirow{2}{*}{lymph} & \multirow{2}{*}{148} & \multirow{2}{*}{68} & ENUM & 0.008 & 0.008 & 1655 & V39 $|$ V56 & 1562 \\
 & & & Gurobi & 214.038 & 28.534 & 1655 & V39 $|$ V56 & 0 \\
\hline 
\multirow{2}{*}{primary-tumor} & \multirow{2}{*}{336} & \multirow{2}{*}{31} & ENUM & 0.006 & 0.006 & 7728 & V21 $|$ V29 & 284 \\
 & & & Gurobi & 530.928 & 70.518 & 7728 & V21 $|$ V29 & 0 \\
\hline 
\multirow{2}{*}{soybean} & \multirow{2}{*}{630} & \multirow{2}{*}{50} & ENUM & 0.016 & 0.017 & 19964 & V24 $|$ V36 & 997 \\
 & & & Gurobi & 2304.774 & 306.424 & 19964 & V24 $|$ V36 & 0 \\
\hline 
\multirow{2}{*}{tic-tac-toe} & \multirow{2}{*}{958} & \multirow{2}{*}{27} & ENUM & 0.012 & 0.012 & 95336 & V15 & 378 \\
 & & & Gurobi & 14160.407 & 7202.697 & 105464 & V28 & 0 \\
\hline 
\multirow{2}{*}{vote} & \multirow{2}{*}{435} & \multirow{2}{*}{48} & ENUM & 0.011 & 0.013 & 3547 & V12 & 631 \\
 & & & Gurobi & 1027.852 & 346.244 & 3547 & V12 & 0 \\
\hline 
\multirow{2}{*}{zoo-1} & \multirow{2}{*}{101} & \multirow{2}{*}{36} & ENUM & 0.002 & 0.002 & 0 & V8 & 36 \\
 & & & Gurobi & 0.277 & 0.283 & 0 & V8 & 0 \\
\hline 
\end{tabular}
\end{table}

\subsection{Comparison with the DL8.5, OSDT and GOSDT algorithms} \label{sec:comp_dl85_osdt}
In this section, we compare {\ttfamily bsnsing} against three recently developed optimal decision tree (ODT) methods, namely, DL8.5 \citep{Aglin_Nijssen_Schaus_2020}, OSDT (Optimal Sparse Decision Trees) \citep{DBLP:journals/corr/abs-1904-12847} and GOSDT (Generalized and scalable optimal sparse decision trees) \citep{lin2020generalized}. The software programs were obtained through the Github links provided in the respective papers.  
Given that the model assumptions, formulation and parameters are all different for the different tools under comparison, we do not aim to provide a comprehensive evaluation, by, for instance, performing problem-specific parameter tuning and model interpretation, or making inferences about which method is most suitable for what kind of data and applications. Instead, we will exhibit our computational experience from a user's perspective, and let it convey the unique position of {\ttfamily bsnsing} among other recent ODT tools. 

We experiment on the 24 data sets (i.e., binary classification problems with binary features) used in the DL8.5 paper \citep{Aglin_Nijssen_Schaus_2020}. 
	The experiments were performed as follows. The comparison experiment on each data set was repeated 20 times. In each repetition, using an arbitrarily sequenced seed for the random number generator (RNG), we randomly split the data set into two parts, 70\% for training and 30\% for testing. For each method, we recorded the classification accuracy on the test set, as well as the time taken (in seconds) to train the respective models. For {\ttfamily bsnsing}, we adopted the default parameters, specifically, {\ttfamily opt.solver = `enum\_c', max.rules = 2}, and set the {\ttfamily node.size} to $n^{1/4}$, where $n$ is the training sample size; for DL8.5, we adopted the default parameters as recommended in the package's companion paper and in its online sample code, that is, ``max\_depth=3, min\_sup=5''; for OSDT, we adopted the default values for all optional parameters and set the required parameters to ``lamb=0.005, prior\_metric=`curiosity''', by consulting the sample code provided in the author's Github page; and for GOSDT, we set the required parameter ``regulerization'' to 0.001\footnote{The regularization parameter in GOSDT is the multiplier on the number-of-leaves term in the two-term objective function to be minimized, whereas the other term is the training accuracy (with multiplier 1).}, a value found suitable in several experiments in the extended manuscript of the method, see the appendices of \cite{lin2020generalized_long}. 
In addition, for OSDT and GOSDT we set the time limit to 5 minutes (via setting the parameter ``timelimit=300'' in OSDT and ``time\_limit=300'' in GOSDT), because we observed (which was also acknowledged by OSDT's developer) that the memory usage of the OSDT and GOSDT training process increases linearly and quickly with execution time.

The comparison results are listed in Table \ref{tb:dl85_bsnsing_compare}. The columns $n$, $p$ and MR characterize the data sets, where MR represents the rate of the minority class. The best accuracy (averaged over 20 runs) for each data set is highlighted in boldface. There is not a clear winner (or loser) in terms of prediction accuracy among the four methods - overall, all methods seem comparably capable. However, the differences in training time is significant. First, we can observe that OSDT and GOSDT are less competitive in training time. Another factor that discounts OSDT algorithm's actual performance is that, the OSDT program internally uses scikit-learn's decision tree classifier (which implements the CART algorithm) as its baseline predictor, and will return the CART model if the OSDT algorithm cannot do better. In most cases, the OSDT program indeed returned the CART model for prediction (the CART column in Table \ref{tb:dl85_bsnsing_compare} notes the proportion of runs in which the CART model was returned by the OSDT method), indicating that the OSDT algorithm did not outperform the CART baseline in these cases.

\begin{table}
\centering
\footnotesize
\caption{Computational comparison of four ODT packages.} \label{tb:dl85_bsnsing_compare}
\begin{tabular}{l r r r@{\hskip 0.3in} r r r@{\hskip 0.3in} r r@{\hskip 0.3in} r r r@{\hskip 0.3in} r r}
\hline
\multirow{2}{*}{} & \multirow{2}{*}{$n$} & \multirow{2}{*}{$p$} & \multirow{2}{*}{MR} & \multicolumn{3}{c}{bsnsing} & \multicolumn{2}{c}{DL8.5} & \multicolumn{3}{c}{OSDT} & \multicolumn{2}{c}{GOSDT} \\
\cline{5-14}
 & & & & Accu & CPU & Dp & Accu & CPU & Accu & CPU & CART & Accu & CPU \\
\hline
anneal & 812 & 93 & 0.230 & 0.844 & 2.6 &  7  & \textbf{0.847} & 1.1 & 0.839 & 305.0 & 0.8 & 0.846 & 304.4 \\  
audiology & 216 & 148 & 0.264 & 0.925 & 0.4 &  3  & 0.925 & 0.2 & \textbf{0.938} & 297.9 & 0.3 & 0.934 & 306.0 \\  
australian-credit & 653 & 125 & 0.453 & 0.823 & 2.3 &  6  & 0.853 & 4.6 & \textbf{0.858} & 304.1 & 0.7 & 0.857 & 311.4 \\  
breast-wisconsin & 683 & 120 & 0.350 & 0.953 & 0.8 &  4  & \textbf{0.955} & 3.1 & 0.951 & 304.6 & 0.3 & 0.943 & 307.8 \\  
diabetes & 768 & 112 & 0.349 & 0.717 & 3.8 &  7  & 0.737 & 5.4 & \textbf{0.750} & 305.7 & 0.8 & 0.749 & 314.5 \\  
german-credit & 1000 & 112 & 0.300 & 0.676 & 5.0 &  7  & \textbf{0.728} & 4.2 & 0.723 & 305.2 & 0.6 & 0.708 & 319.9 \\  
heart-cleveland & 296 & 95 & 0.459 & 0.752 & 1.2 &  5  & \textbf{0.766} & 1.9 & 0.763 & 303.7 & 0.9 & 0.746 & 306.9 \\  
hepatitis & 137 & 68 & 0.190 & 0.790 & 0.4 &  4  & 0.802 & 0.5 & 0.775 & 306.0 & 1.0 & \textbf{0.829} & 303.1 \\  
hypothyroid & 3247 & 88 & 0.085 & 0.975 & 2.3 &  6  & 0.979 & 2.2 & \textbf{0.979} & 304.7 & 0.4 & \textbf{0.979} & 307.0 \\  
ionosphere & 351 & 445 & 0.359 & 0.881 & 4.2 &  4  & 0.869 & 235.0 & 0.888 & 302.8 & 0.7 & \textbf{0.912} & 519.6 \\  
kr-vs-kp & 3196 & 73 & 0.478 & \textbf{0.984} & 2.0 &  7  & 0.936 & 1.3 & 0.969 & 303.9 & 1.0 & 0.853 & 307.9 \\  
letter & 20000 & 224 & 0.041 & \textbf{0.990} & 39.5 &  8  & 0.981 & 304.4 & 0.959 & 3033.1 & 1.0 & 0.959 & 631.6 \\  
lymph & 148 & 68 & 0.453 & \textbf{0.810} & 0.3 &  4  & 0.801 & 0.2 & 0.798 & 305.2 & 0.9 & 0.761 & 303.3 \\  
mushroom & 8124 & 119 & 0.482 & \textbf{0.999} & 2.7 &  4  & 0.999 & 4.4 & 0.994 & 305.1 & 0.0 & 0.943 & 314.4 \\  
pendigits & 7494 & 216 & 0.104 & \textbf{0.994} & 10.9 &  5  & 0.991 & 94.4 & 0.989 & 303.4 & 0.0 & 0.970 & 381.2 \\  
primary-tumor & 336 & 31 & 0.244 & 0.778 & 0.7 &  6  & \textbf{0.843} & 0.1 & 0.822 & 304.9 & 0.2 & 0.751 & 169.6 \\  
segment & 2310 & 235 & 0.143 & \textbf{0.996} & 1.6 &  3  & 0.996 & 12.8 & 0.995 & 4.6 & 0.9 & 0.996 & 311.0 \\  
soybean & 630 & 50 & 0.146 & 0.935 & 0.7 &  6  & \textbf{0.941} & 0.2 & 0.938 & 306.2 & 0.3 & 0.862 & 304.6 \\  
splice-1 & 3190 & 287 & 0.481 & \textbf{0.946} & 18.2 &  7  & 0.926 & 51.6 & 0.946 & 303.0 & 1.0 & 0.835 & 308.4 \\  
tic-tac-toe & 958 & 27 & 0.347 & 0.875 & 1.0 &  7  & 0.733 & 0.1 & \textbf{0.901} & 304.1 & 1.0 & 0.757 & 305.3 \\  
vehicle & 846 & 252 & 0.258 & 0.952 & 3.1 &  5  & \textbf{0.956} & 33.6 & 0.946 & 303.5 & 0.8 & 0.879 & 354.0 \\  
vote & 435 & 48 & 0.386 & 0.940 & 0.3 &  4  & 0.943 & 0.2 & \textbf{0.956} & 304.2 & 0.1 & 0.950 & 304.7 \\  
yeast & 1484 & 89 & 0.312 & 0.698 & 7.1 &  8  & 0.692 & 3.1 & 0.701 & 305.8 & 0.3 & \textbf{0.701} & 308.9 \\  
zoo-1 & 101 & 36 & 0.406 & 0.995 & 0.0 &  1  & 0.995 & 0.0 & \textbf{1.000} & 0.2 & 1.0 & 0.995 & 0.0 \\  
\hline
\end{tabular}
\end{table}

\begin{table}
\centering
\footnotesize
\caption{Performance comparison under depth constraints. } \label{tb:dl85_compare_depth1}
\begin{tabular}{l r r r r@{\hskip 0.3in} r r r r@{\hskip 0.3in} r r r r}
\hline
&  \multicolumn{4}{c}{depth = 1} &  \multicolumn{4}{c}{depth = 2} &  \multicolumn{4}{c}{depth = 3} \\
\cline{2-13}
 & bs(2) & bs(1) & DL8.5 & OSDT & bs(2) & bs(1) & DL8.5 & OSDT & bs(2) & bs(1)  & DL8.5 & OSDT \\
\hline
anneal & 0.778 & 0.778 & \textbf{0.818} & 0.818 & 0.786 & 0.778 & \textbf{0.826} & 0.824 & 0.806 & 0.777 & \textbf{0.847} & 0.836 \\  
audiology & \textbf{0.929} & 0.856 & 0.856 & 0.856 & 0.925 & 0.924 & \textbf{0.934} & \textbf{0.934} & 0.925 & 0.923 & 0.925 & \textbf{0.934} \\  
australian-credit & 0.854 & \textbf{0.866} & \textbf{0.866} & \textbf{0.866} & 0.854 & \textbf{0.866} & 0.851 & 0.862 & 0.849 & 0.852 & \textbf{0.853} & 0.849 \\  
breast-wisconsin & \textbf{0.933} & 0.919 & 0.923 & 0.918 & 0.940 & 0.921 & 0.960 & \textbf{0.960} & \textbf{0.956} & 0.951 & 0.955 & 0.955 \\  
diabetes & 0.716 & 0.725 & \textbf{0.751} & \textbf{0.751} & 0.754 & 0.734 & 0.754 & \textbf{0.762} & 0.747 & \textbf{0.760} & 0.737 & 0.755 \\  
german-credit & 0.696 & 0.699 & \textbf{0.703} & 0.703 & 0.709 & 0.700 & \textbf{0.717} & 0.717 & 0.711 & 0.703 & \textbf{0.728} & 0.725 \\  
heart-cleveland & 0.736 & 0.737 & \textbf{0.737} & 0.737 & \textbf{0.753} & 0.735 & 0.735 & 0.729 & 0.774 & \textbf{0.811} & 0.766 & 0.798 \\  
hepatitis & 0.798 & 0.802 & \textbf{0.844} & 0.840 & 0.812 & 0.814 & \textbf{0.831} & 0.817 & 0.787 & 0.794 & \textbf{0.802} & 0.800 \\  
hypothyroid & \textbf{0.963} & \textbf{0.963} & \textbf{0.963} & \textbf{0.963} & 0.973 & 0.963 & \textbf{0.979} & \textbf{0.979} & 0.977 & 0.969 & 0.979 & \textbf{0.979} \\  
ionosphere & \textbf{0.912} & 0.758 & 0.820 & 0.820 & 0.894 & 0.837 & 0.901 & \textbf{0.904} & 0.875 & 0.851 & 0.869 & \textbf{0.891} \\  
kr-vs-kp & \textbf{0.772} & 0.684 & 0.678 & 0.678 & \textbf{0.934} & 0.868 & 0.868 & 0.868 & \textbf{0.949} & 0.923 & 0.936 & 0.903 \\  
letter & \textbf{0.959} & \textbf{0.959} & \textbf{0.959} & \textbf{0.959} & 0.959 & 0.959 & \textbf{0.969} & 0.968 & \textbf{0.982} & 0.958 & 0.981 & 0.959 \\  
lymph & 0.743 & \textbf{0.772} & 0.752 & \textbf{0.772} & \textbf{0.780} & 0.777 & 0.770 & 0.766 & \textbf{0.811} & 0.803 & 0.801 & 0.784 \\  
mushroom & \textbf{0.951} & 0.887 & 0.887 & 0.887 & \textbf{0.983} & 0.915 & 0.969 & 0.969 & \textbf{0.999} & 0.969 & 0.999 & 0.994 \\  
pendigits & \textbf{0.967} & 0.895 & 0.931 & 0.931 & \textbf{0.988} & 0.978 & 0.978 & 0.978 & 0.991 & 0.987 & \textbf{0.991} & 0.987 \\  
primary-tumor & 0.765 & \textbf{0.775} & 0.765 & 0.769 & \textbf{0.806} & \textbf{0.806} & 0.801 & 0.803 & 0.803 & 0.797 & \textbf{0.843} & 0.836 \\  
segment & \textbf{0.991} & 0.926 & 0.981 & 0.981 & \textbf{0.996} & 0.995 & 0.995 & 0.990 & \textbf{0.996} & 0.995 & 0.996 & 0.995 \\  
soybean & \textbf{0.862} & \textbf{0.862} & \textbf{0.862} & \textbf{0.862} & 0.854 & 0.862 & 0.906 & \textbf{0.913} & 0.913 & 0.890 & \textbf{0.941} & 0.920 \\  
splice-1 & \textbf{0.824} & 0.816 & 0.816 & 0.816 & \textbf{0.875} & 0.835 & 0.827 & 0.831 & \textbf{0.943} & 0.909 & 0.926 & 0.908 \\  
tic-tac-toe & 0.665 & 0.669 & \textbf{0.703} & \textbf{0.703} & \textbf{0.693} & 0.683 & 0.673 & 0.678 & 0.744 & \textbf{0.745} & 0.733 & 0.735 \\  
vehicle & \textbf{0.865} & 0.736 & 0.755 & 0.755 & \textbf{0.913} & 0.895 & 0.904 & 0.904 & 0.944 & 0.905 & \textbf{0.956} & 0.907 \\  
vote & 0.948 & \textbf{0.957} & \textbf{0.957} & \textbf{0.957} & 0.948 & \textbf{0.957} & 0.950 & 0.956 & 0.944 & 0.944 & 0.943 & \textbf{0.948} \\  
yeast & 0.687 & 0.687 & \textbf{0.702} & \textbf{0.702} & \textbf{0.705} & 0.687 & 0.690 & 0.696 & 0.694 & 0.694 & 0.692 & \textbf{0.702} \\  
zoo-1 & 0.995 & 0.995 & 0.995 & \textbf{1.000} & \textbf{0.995} & \textbf{0.995} & \textbf{0.995} & \textbf{0.995} & \textbf{0.995} & \textbf{0.995} & \textbf{0.995} & \textbf{0.995} \\ 
\hline
Average & \textbf{0.846} & 0.822 & 0.834 & 0.835 & \textbf{0.868} & 0.854 & 0.866 & 0.867 & 0.880 & 0.871 & \textbf{0.883} & 0.879 \\
\hline
\end{tabular}
\end{table}

The {\ttfamily bsnsing} package also compares favorably to DL8.5 in training speed, especially for large instances. Let us look at the letter data set which consists of 14000 training samples for example: it took {\ttfamily bsnsing} on average 39.5 seconds to train a model that worked better than the DL8.5 model, which took 304.4 seconds to train on average. 
It has been shown in prior work, i.e., \citep{Aglin_Nijssen_Schaus_2020}, that DL8.5 runs orders of magnitude faster than several other optimal decision tree methods, including the original DL8 algorithm \citep{10.1145/1281192.1281250}, a constrained programming (CP)-based method \citep{CPtree2020}, and an MIP-based method BinOCT \citep{VerwerZhang2019}. Hence, we can infer that {\ttfamily bsnsing} must also outrun those other methods by a substantial margin. Overall, it is reasonable to claim that {\ttfamily bsnsing}, bearing comparable prediction accuracy, stands out in training speed among decision tree methods that involve solving mathematical optimization problems in the training process. 

In most decision tree induction methods, depth (i.e., level distance between the root node and the deepest node in a tree) is a hyperparameter for adjusting the classifier's complexity with regard to the bias-variance tradeoff. Heuristic methods such as CART typically realize depth control via tree pruning, while most ODT methods can explicitly constrain the maximum depth in the optimization models. 
However, the {\ttfamily bsnsing} method 
does not endogeneously handle a constraint on the maximum depth. The ``Dp'' column in Table \ref{tb:dl85_bsnsing_compare} lists the mode (most frequent value out of the 20 runs) of the depth of the {\ttfamily bsnsing} trees. 
We can see that, compared to trees built by DL8.5 which have a default ``max\_depth'' of 3, the trees built by {\ttfamily bsnsing} generally reach deeper, though not all branches extend to the same depth. 

Some users might take depth as a proxy for interpretability of a decision tree - shallower trees, or trees with fewer leaves, are deemed more interpretable than deeper trees. 
To facilitate performance comparison with depth-constrained ODT trees, we can naively prune a {\ttfamily bsnsing} tree so as to keep the number of leaf nodes below that of a binary tree of a given depth. For instance, a tree of depth 1, 2 and 3 would have at most 2, 4 and 8 leaf nodes, respectively. Using this method, we repeat the above experiments (over the 24 binary classification data sets, 20 runs for each) under different maximum depth values. The average out-of-sample accuracies are reported in Table \ref{tb:dl85_compare_depth1}, with the best value in each depth group highlighted in boldface. Two {\ttfamily max.rules} settings are tested for {\ttfamily bsnsing}: the default setting with {\ttfamily max.rules}=2, reported in column bs(2), and the {\ttfamily max.rules}=1 setting, reported in column bs(1). Since the GOSDT package does not have a parameter to limit the maximum depth or the number leaves, it is not part of the experimentation. Also, all the OSDT runs with depth = 2 and 3 have hit the 5-min time limit\footnote{Without the time limit, OSDT would in many cases exhaust the computer memory before terminating.}.  

We can see that the pruned {\ttfamily bsnsing} trees remain quite competitive, in many cases outperforming the DL8.5 and OSDT trees of the same depth. The multivariate splits (with {\ttfamily max.rules}=2) clearly give {\ttfamily bsnsing} an advantage in these comparisons. Under {\ttfamily max.rules}=1, the pruned {\ttfamily bsnsing} trees become least accurate in most cases. Though the {\ttfamily max.rules}=1 setting along with the naive pruning is not recommended for {\ttfamily bsnsing}'s practical use, comparing bs(1) with DL8.5 and OSDT does highlight the benefits of holistic optimization in tree induction, as argued in several ODT papers.  Another interesting, yet expected, observation is that the constraint on depth, no matter how it is realized in different packages, does not affect the new-data prediction accuracy in any deterministic direction (increase or descrease). An optimal (unconstrained or unpruned) tree may perform worse than a depth-constrained (or naively pruned, as in the {\ttfamily bsnsing} case) tree in some cases. This observation enhances the understanding that in machine learning algorithms, the notion of optimality only applies to the training problem, not to the inference problem. In other words, there is no single algorithm or parameter setting that is best-performing in all cases. Moreover, comparing the average (across all data sets) accuracies of the pruned {\ttfamily bsnsing} trees in column bs(2) and the average accuracy (0.885) of the original {\ttfamily bsnsing} trees in Table \ref{tb:dl85_bsnsing_compare}, we can see that the naive pruning strategy generally hurts performance, upholding the effectiveness of {\ttfamily bsnsing}'s algorithm design and the default parameter setting.

\subsection{Comparison with other decision tree packages in R}
In this section, we compare the out-of-the-box performance (i.e.,  using all default options and no hyperparameter tuning) of the {\ttfamily bsnsing} package against several other decision tree packages, namely,  {\ttfamily C50},  {\ttfamily party},  {\ttfamily tree} and {\ttfamily rpart},  that are available on the Comprehensive R Archive Network (CRAN). 
Then, for those cases on which {\ttfamily bsnsing} performs poorly, we will demonstrate some simple methods to improve the performance. 

Data used in the benchmarking experiments include 57 data sets for binary classification and 18 data sets for multi-class classification.  Among these 75 data sets,  one (iris) is from the \emph{datasets} package, one (bank) is from a FICO-sponsored explainable machine learning challenge \citep{FICOdata}, two (compas and heloc) are from the ProPublica and Trusted-AI GitHub repositories, two (GlaucomaMVF and dystrophy) are from the \emph{ipred} package,  six (BreastCancer, Glass, smiley, spirals, xor and Sonar) are from / generated by the \emph{mlbench} package,  six (obli, grid, diam, circ, ring and sha88) are synthetic data sets for 2D pattern recognition (see Figure \ref{fig:shapes_plot}), and the remaining 58 data sets are sourced from the UCI Machine Learning Repository \citep{Dua:2019}.  The names,  the number of observations ($n$), the number of independent variables ($p$), (for binary-class) the rate of the minority class (MR), and (for multi-class) the number of target classes ($J$) are listed in Tables \ref{tb:binary_datasets} and  \ref{tb:mult_datasets} for binary and multi-class classification data sets, respectively. 
This collection covers most of the commonly used data sets for methodology benchmarking in the classification tree literature, hence the data collection itself can be useful for future research.
These data sets are accessible by name in the R environment once the {\ttfamily bsnsing} library is loaded.  The R scripts for conducting the subsequent experiments are not part of the library but will be published in a different repository.  

\begin{table}
\centering
\small
\caption{Binary classification data sets.} \label{tb:binary_datasets}
\begin{tabular}{l r r r@{\hskip 0.3in} | l r r r@{\hskip 0.3in} | l r r r}
\hline
Name & $n$ & $p$ & MR & Name & $n$ & $p$ & MR & Name & $n$ & $p$ & MR \\
\hline
acute1 & 120 & 6 & 0.492 & haberman & 306 & 3 & 0.265 & pima & 768 & 8 & 0.349 \\  
 acute2 & 120 & 6 & 0.417 & heart & 303 & 13 & 0.459 & Qsar & 1055 & 41 & 0.337 \\  
 Adult & 32561 & 13 & 0.241 & heloc & 10459 & 23 & 0.478 & relax & 182 & 12 & 0.286 \\  
 auto & 392 & 7 & 0.375 & hepatitis & 155 & 19 & 0.206 & retention & 10000 & 8 & 0.338 \\  
 bank & 45211 & 16 & 0.117 & HTRU2 & 17898 & 8 & 0.092 & ring & 600 & 2 & 0.500 \\  
 banknote & 1372 & 4 & 0.445 & ILPD & 583 & 10 & 0.286 & seismic & 1690 & 18 & 0.031 \\  
 birthwt & 189 & 9 & 0.312 & Ionos & 351 & 34 & 0.359 & sh88 & 600 & 2 & 0.500 \\  
 BreastCancer & 699 & 9 & 0.345 & magic04 & 19020 & 10 & 0.352 & Sonar & 208 & 60 & 0.466 \\  
 circ & 600 & 2 & 0.500 & mammo & 830 & 5 & 0.486 & spambase & 4601 & 57 & 0.394 \\  
 climate & 540 & 18 & 0.085 & Monks1 & 556 & 6 & 0.500 & SPECT & 267 & 22 & 0.206 \\  
 compas & 7214 & 52 & 0.451 & Monks2 & 601 & 6 & 0.343 & spirals & 600 & 2 & 0.500 \\  
 connect & 208 & 60 & 0.466 & Monks3 & 554 & 6 & 0.480 & statlog.a & 690 & 14 & 0.445 \\  
 credit & 690 & 15 & 0.445 & Mushroom & 8124 & 21 & 0.482 & thoraric & 470 & 16 & 0.149 \\  
 diam & 600 & 2 & 0.500 & norm3p10 & 600 & 10 & 0.500 & tictactoe & 958 & 9 & 0.347 \\  
 dystrophy & 209 & 9 & 0.359 & norm3p5 & 600 & 5 & 0.500 & titanic & 2201 & 3 & 0.323 \\  
 Echocard & 61 & 11 & 0.279 & obli & 600 & 2 & 0.500 & trans & 748 & 4 & 0.238 \\  
 Fertility & 100 & 9 & 0.120 & ozone1 & 2536 & 72 & 0.029 & votes & 435 & 16 & 0.386 \\  
 GlaucomaMVF & 170 & 66 & 0.500 & ozone8 & 2534 & 72 & 0.063 & wdbc & 569 & 30 & 0.373 \\  
 grid & 600 & 2 & 0.475 & parkins & 195 & 22 & 0.246 & wpbc & 198 & 33 & 0.237 \\  
\hline
\end{tabular}
\end{table}

\begin{figure}[!htb]
    \centering
        \includegraphics[width=0.95\linewidth]{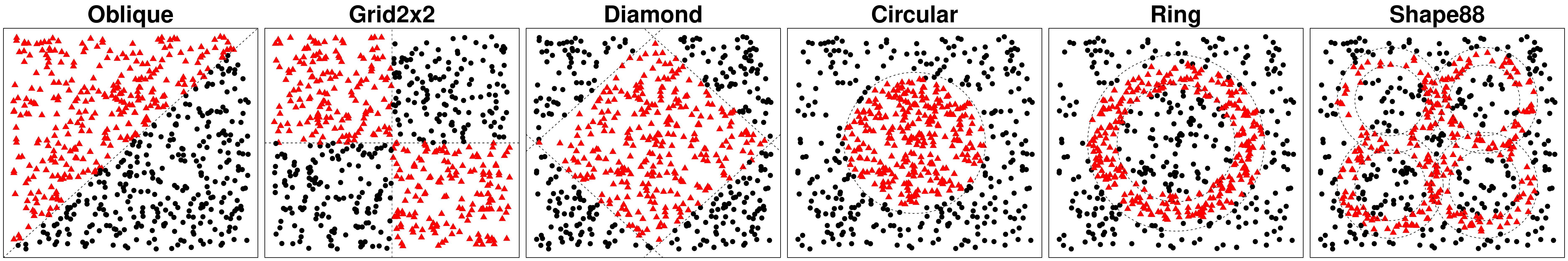}
\caption{Synthetic data sets for pattern recognition.  Input variables are the $x$ and $y$ coordinate.  Some slanted and nonlinear class boundaries are unamenable to the rectilinear split boundaries produced by tree models.} \label{fig:shapes_plot}
\end{figure}

\begin{table}
\centering
\small
\caption{Multi-class classification data sets.} \label{tb:mult_datasets}
\begin{tabular}{l r r r@{\hskip 0.3in} | l r r r@{\hskip 0.3in} | l r r r}
\hline
Name & $n$ & $p$ & $J$ & Name & $n$ & $p$ & $J$ & Name & $n$ & $p$ & $J$ \\
\hline
derm & 366 & 34 & 6 & imgsegm & 210 & 19 & 7 & thyroid & 3772 & 21 & 3 \\  
 iris & 150 & 4 & 3 & Hayes & 132 & 4 & 3 & wine & 178 & 13 & 3 \\  
 smiley & 500 & 2 & 4 & contra & 1473 & 9 & 3 & WineQuality & 4898 & 11 & 7 \\  
 xor3 & 600 & 3 & 4 & balance & 625 & 4 & 3 & nursery & 12960 & 8 & 5 \\  
 Glass & 214 & 9 & 6 & soybean.l & 266 & 35 & 15 &  & & & \\
 optdigits & 5620 & 64 & 10 & soybean.s & 47 & 35 & 4 &  & & & \\
 Seeds & 210 & 7 & 3 & tae & 151 & 5 & 3 &  & & & \\
\hline
\end{tabular}
\end{table}

\subsubsection{Out-of-the-box performance comparison.}

The experiments are conducted as follows.  For each data set, we randomly split all observations into two parts,  70\% for training and 30\% for testing.  The training set is fed into different decision tree functions to build the respective tree models,  then the models are fed into the {\ttfamily predict} functions of the respective packages to make predictions on the test set.  Accuracy and the area under the ROC curve (AUC) values are calculated for each method based on the prediction results.  To calculate the Accuracy,  class label predictions are requested from the predict functions,  and to calculate the AUC,  score (or probability) predictions are requested from the predict functions.   The whole process (i.e.,  random 70/30 split,  training and testing) is repeated 20 times with documented random number generator (RNG) seeds for each data set,  and the corresponding Mean Accuracy and Mean AUC (for binary classification) are reported in Tables \ref{tb:result_biclass} and \ref{tb:result_multclass}.  The average computing time in seconds of the {\ttfamily bsnsing} method is also reported under the CPU column in the tables.  The computing times of other methods are consistently below 1 second for all test cases, thus they are omitted from the report. 

\begin{table}
\centering
\small
\caption{Comparison on binary classification cases.} \label{tb:result_biclass}
\begin{tabular}{l r r r r r@{\hskip 0.3in}  r r r r r r}
\hline
 & \multicolumn{5}{c}{Mean Accuracy} & \multicolumn{5}{c}{Mean AUC} & \multirow{2}{*}{CPU} \\
\cline{2-11}
 & C5.0 & ctree & rpart & tree & bsnsing & C5.0 & ctree & rpart & tree & bsnsing & \\
\hline
acute1 &{\color{red}1.000} & 0.903 & 0.912 & 0.994 & 0.896 & {\color{red}1.000} & 0.957 & 0.932 & 0.996 & 0.970 & 0.1\\ 
acute2 &{\color{red}1.000} & 0.954 & 0.958 & {\color{red}1.000} & {\color{red}\textbf{1.000}} & {\color{red}1.000} & 0.989 & 0.965 & {\color{red}1.000} & {\color{red}\textbf{1.000}} & 0.0\\ 
Adult &{\color{red}0.864} & 0.847 & 0.832 & 0.832 & 0.825 & 0.887 & {\color{red}0.894} & 0.818 & 0.852 & \textbf{0.870} & 82.0 \\ 
auto &{\color{red}0.897} & 0.871 & 0.882 & 0.879 & 0.864 & {\color{red}0.953} & 0.901 & 0.938 & 0.923 & \textbf{0.941} & 0.3\\ 
bank &0.902 & {\color{red}0.904} & 0.901 & 0.890 & \textbf{0.901} & 0.880 & {\color{red}0.915} & 0.761 & 0.881 & \textbf{0.900} & 71.0 \\ 
banknote &{\color{red}0.981} & 0.967 & 0.966 & 0.978 & 0.963 & {\color{red}0.985} & 0.977 & 0.976 & 0.984 & {\color{red}\textbf{0.985}} & 0.5\\ 
birthwt &0.649 & {\color{red}0.674} & 0.661 & 0.626 & 0.629 & 0.541 & 0.486 & 0.581 & 0.584 & {\color{red}\textbf{0.587}} & 0.3\\ 
BreastCancer &0.942 & 0.946 & 0.940 & {\color{red}0.950} & \textbf{0.945} & 0.968 & 0.974 & 0.950 & 0.969 & {\color{red}\textbf{0.978}} & 0.3\\ 
circ &{\color{red}0.951} & 0.580 & 0.936 & 0.940 & \textbf{0.906} & 0.961 & 0.592 & 0.942 & {\color{red}0.966} & \textbf{0.956} & 0.2\\ 
climate &0.922 & 0.920 & {\color{red}0.929} & 0.921 & \textbf{0.925} & 0.810 & 0.809 & 0.809 & 0.757 & {\color{red}\textbf{0.813}} & 0.6\\ 
compas &0.890 & 0.889 & {\color{red}0.891} & {\color{red}0.891} & 0.890 & 0.918 & 0.934 & 0.884 & 0.919 & {\color{red}\textbf{0.934}} & 4.0\\ 
connect &0.702 & 0.694 & {\color{red}0.727} & 0.719 & \textbf{0.721} & 0.755 & 0.749 & 0.778 & 0.770 & {\color{red}\textbf{0.788}} & 2.0\\ 
credit &0.852 & {\color{red}0.852} & 0.851 & 0.840 & \textbf{0.851} & 0.897 & 0.909 & 0.902 & 0.890 & {\color{red}\textbf{0.913}} & 1.0\\ 
diam &{\color{red}0.923} & 0.477 & 0.890 & 0.906 & \textbf{0.906} & 0.947 & 0.503 & 0.919 & 0.944 & {\color{red}\textbf{0.953}} & 0.3\\ 
dystrophy &{\color{red}0.841} & 0.821 & 0.822 & 0.837 & 0.813 & 0.846 & 0.853 & 0.823 & {\color{red}0.857} & 0.838 & 0.2\\ 
Echocard &0.958 & 0.936 & {\color{red}0.966} & {\color{red}0.966} & 0.947 & 0.954 & 0.953 & 0.963 & {\color{red}0.967} & 0.934 & 0.0\\ 
Fertility &0.882 & {\color{red}0.893} & 0.865 & 0.850 & 0.860 & 0.545 & 0.535 & 0.578 & 0.635 & {\color{red}\textbf{0.664}} & 0.1\\ 
GlaucomaMVF &0.890 & 0.836 & 0.895 & 0.879 & {\color{red}\textbf{0.900}} & 0.933 & 0.894 & 0.946 & {\color{red}0.948} & {\color{red}\textbf{0.948}} & 1.0\\ 
grid &0.520 & 0.520 & {\color{red}0.976} & 0.610 & \textbf{0.972} & 0.505 & 0.505 & {\color{red}0.990} & 0.615 & {\color{red}\textbf{0.990}} & 0.1\\ 
haberman &{\color{red}0.736} & 0.717 & 0.726 & 0.714 & \textbf{0.729} & 0.544 & 0.619 & 0.640 & 0.649 & {\color{red}\textbf{0.672}} & 0.3\\ 
heart &0.777 & 0.752 & {\color{red}0.792} & 0.766 & 0.768 & 0.811 & 0.805 & {\color{red}0.822} & 0.808 & {\color{red}{\textbf{0.822}}} & 0.4\\ 
heloc &{\color{red}0.706} & 0.696 & 0.700 & 0.697 & 0.698 & 0.749 & 0.757 & 0.706 & 0.736 & {\color{red}\textbf{0.758}} & 37.0 \\ 
hepatitis &0.795 & 0.791 & 0.789 & {\color{red}0.808} & \textbf{0.798} & 0.716 & 0.706 & 0.677 & 0.716 & {\color{red}\textbf{0.764}} & 0.2\\ 
HTRU2 &{\color{red}0.979} & 0.979 & 0.978 & 0.977 & 0.977 & 0.948 & {\color{red}0.974} & 0.909 & 0.969 & \textbf{0.966} & 23.0 \\ 
ILPD &0.678 & {\color{red}0.704} & 0.681 & 0.673 & 0.682 & 0.675 & 0.665 & 0.657 & {\color{red}0.681} & \textbf{0.677} & 0.8\\ 
Ionos &0.892 & {\color{red}0.905} & 0.870 & 0.874 & 0.859 & {\color{red}0.920} & 0.901 & 0.901 & 0.901 & 0.894 & 1.0 \\ 
magic04 &{\color{red}0.850} & 0.844 & 0.819 & 0.814 & \textbf{0.840} & 0.885 & 0.892 & 0.811 & 0.842 & {\color{red}\textbf{0.893}} & 106.0 \\ 
mammo &0.828 & 0.807 & {\color{red}0.830} & 0.823 & \textbf{0.824} & 0.869 & 0.855 & 0.868 & 0.878 & {\color{red}\textbf{0.888}} & 0.7\\ 
Monks1 &{\color{red}0.898} & 0.743 & 0.840 & 0.743 & \textbf{0.871} & 0.899 & 0.739 & 0.916 & 0.739 & {\color{red}\textbf{0.956}} & 0.3\\ 
Monks2 &{\color{red}0.925} & 0.650 & 0.750 & 0.656 & 0.607 & {\color{red}0.973} & 0.492 & 0.800 & 0.540 & 0.598 & 0.7\\ 
Monks3 &{\color{red}0.989} & 0.961 & 0.977 & 0.986 & 0.960 & 0.987 & 0.983 & 0.979 & {\color{red}0.989} & \textbf{0.986} & 0.2\\ 
Mushroom &{\color{red}1.000} & 0.999 & 0.994 & 0.999 & 0.987 & 1.000 & {\color{red}1.000} & 0.994 & 0.999 & \textbf{0.999} & 4.0 \\ 
norm3p10 &{\color{red}0.768} & 0.749 & 0.758 & 0.766 & 0.756 & 0.823 & 0.814 & 0.801 & 0.815 & {\color{red}\textbf{0.831}} & 1.0 \\ 
norm3p5 &{\color{red}0.838} & 0.829 & 0.831 & 0.836 & \textbf{0.835} & 0.878 & 0.877 & 0.865 & {\color{red}0.896} & \textbf{0.892} & 0.5\\ 
obli &{\color{red}0.945} & 0.922 & 0.914 & 0.935 & 0.902 & {\color{red}0.962} & 0.955 & 0.942 & 0.957 & \textbf{0.961} & 0.2\\ 
ozone1 &0.969 & {\color{red}0.972} & 0.964 & 0.956 & 0.956 & 0.614 & {\color{red}0.784} & 0.674 & 0.632 & \textbf{0.773} & 6.0 \\ 
ozone8 &0.930 & {\color{red}0.932} & 0.930 & 0.923 & 0.922 & 0.777 & {\color{red}0.805} & 0.750 & 0.712 & \textbf{0.795} & 17.0 \\ 
parkins &0.854 & 0.847 & 0.861 & {\color{red}0.868} & \textbf{0.858} & 0.837 & 0.819 & 0.854 & {\color{red}0.868} & 0.842 & 0.4\\ 
pima &0.741 & {\color{red}0.746} & 0.740 & 0.741 & 0.731 & 0.772 & 0.780 & 0.779 & {\color{red}0.787} & \textbf{0.780} & 0.9\\ 
Qsar &{\color{red}0.839} & 0.807 & 0.822 & 0.818 & 0.819 & 0.862 & 0.850 & 0.841 & 0.860 & {\color{red}\textbf{0.872}} & 5.0 \\ 
relax &{\color{red}0.726} & {\color{red}0.726} & 0.594 & 0.612 & 0.617 & 0.492 & 0.492 & 0.483 & 0.503 & {\color{red}\textbf{0.509}} & 0.4\\ 
retention &{\color{red}0.994} & 0.971 & 0.941 & 0.929 & 0.935 & {\color{red}0.999} & 0.989 & 0.950 & 0.966 & \textbf{0.984} & 6.0 \\ 
ring &0.845 & 0.482 & 0.861 & {\color{red}0.881} & \textbf{0.782} & 0.866 & 0.498 & 0.896 & {\color{red}0.918} & \textbf{0.856} & 0.5\\ 
seismic &{\color{red}0.970} & {\color{red}0.970} & {\color{red}0.970} & 0.959 & {\color{red}\textbf{0.970}} & 0.499 & {\color{red}0.625} & 0.499 & 0.568 & \textbf{0.570} & 1.0 \\ 
sh88 &0.737 & 0.477 & 0.812 & {\color{red}0.813} & 0.674 & 0.753 & 0.493 & 0.847 & {\color{red}0.851} & \textbf{0.745} & 0.6\\ 
Sonar &0.702 & 0.694 & {\color{red}0.727} & 0.719 & \textbf{0.721} & 0.755 & 0.749 & 0.778 & 0.770 & {\color{red}\textbf{0.788}} & 2.0 \\ 
spambase &{\color{red}0.924} & 0.906 & 0.896 & 0.902 & \textbf{0.911} & 0.957 & 0.949 & 0.899 & 0.948 & {\color{red}\textbf{0.960}} & 27.0 \\ 
SPECT &0.814 & 0.792 & {\color{red}0.829} & 0.825 & \textbf{0.821} & 0.786 & 0.721 & 0.782 & 0.785 & {\color{red}\textbf{0.803}} & 0.3\\ 
spirals &{\color{red}0.948} & 0.655 & 0.924 & 0.942 & 0.783 & 0.960 & 0.650 & 0.948 & {\color{red}0.962} & 0.847 & 0.5\\ 
statlog.a &0.847 & {\color{red}0.857} & 0.855 & 0.848 & 0.851 & 0.903 & 0.907 & 0.904 & 0.906 & {\color{red}\textbf{0.912}} & 0.8\\ 
thoraric &0.842 & {\color{red}0.850} & 0.831 & 0.791 & 0.817 & 0.505 & 0.517 & 0.524 & {\color{red}0.560} & \textbf{0.539} & 0.7\\ 
tictactoe &{\color{red}0.923} & 0.825 & 0.901 & 0.880 & 0.803 & {\color{red}0.974} & 0.915 & 0.961 & 0.953 & 0.869 & 1.0 \\ 
titanic &0.777 & {\color{red}0.787} & 0.781 & 0.782 & \textbf{0.782} & 0.705 & {\color{red}0.748} & 0.710 & 0.710 & \textbf{0.725} & 0.2\\ 
trans &0.763 & 0.761 & {\color{red}0.775} & 0.770 & 0.765 & 0.680 & 0.690 & {\color{red}0.709} & 0.688 & 0.688 & 0.6\\ 
votes &{\color{red}0.960} & 0.957 & 0.949 & 0.954 & 0.943 & {\color{red}0.986} & 0.978 & 0.962 & 0.984 & \textbf{0.980} & 0.2\\ 
wdbc &0.939 & 0.933 & 0.927 & 0.933 & {\color{red}\textbf{0.944}} & 0.964 & 0.959 & 0.940 & 0.956 & {\color{red}\textbf{0.967}} & 1.0 \\ 
wpbc &0.722 & 0.726 & 0.692 & 0.692 & {\color{red}\textbf{0.730}} & 0.562 & 0.592 & 0.586 & 0.586 & {\color{red}\textbf{0.625}} & 0.9\\ 
\hline
\textbf{Average} & 0.859 & 0.811 & 0.853 & 0.844 & 0.841 & 0.827 & 0.787 & 0.825 & 0.825 & 0.841 & 7.3 \\
\hline
\end{tabular}
\end{table}

\begin{table}
\centering
\small
\caption{Comparison on multi-class classification cases.} \label{tb:result_multclass}
\begin{tabular}{l@{\hskip 0.3in} r r r@{\hskip 0.3in}  r r r r r r}
\hline
 &  \multirow{2}{*}{$n$} & \multirow{2}{*}{$p$} & \multirow{2}{*}{$J$} & \multicolumn{5}{c}{Mean Accuracy} & \multirow{2}{*}{CPU} \\
\cline{5-9}
 & & & & C5.0 & ctree & rpart & tree & bsnsing & \\
\hline
balance & 625 & 4 & 3 & 0.782 & 0.776 & 0.771 & 0.769 & {\color{red}\textbf{0.816}} & 1.4 \\ 
contra & 1473 & 9 & 3 & 0.514 & 0.537 & {\color{red}0.547} & 0.520 & 0.525 & 5.0 \\ 
derm & 366 & 34 & 6 & {\color{red}0.949} & 0.934 & 0.928 & 0.923 & 0.915 & 1.6 \\ 
Glass & 214 & 9 & 6 & {\color{red}0.664} & 0.606 & 0.661 & 0.649 & \textbf{0.649} & 1.2 \\ 
Hayes & 132 & 4 & 3 & {\color{red}0.825} & 0.494 & 0.625 & 0.745 & \textbf{0.698} & 0.5 \\ 
imgsegm & 210 & 19 & 7 & {\color{red}0.864} & 0.784 & 0.843 & 0.848 & 0.828 & 1.1 \\ 
iris & 150 & 4 & 3 & 0.934 & 0.941 & 0.930 & 0.936 & {\color{red}\textbf{0.945}} & 0.2 \\ 
nursery & 12960 & 8 & 5 & {\color{red}0.992} & 0.974 & 0.874 & 0.858 & 0.903 & 21.7 \\ 
optdigits & 5620 & 64 & 10 & {\color{red}0.902} & 0.844 & 0.768 & 0.774 & \textbf{0.897} & 70.8 \\ 
Seeds & 210 & 7 & 3 & 0.912 & 0.879 & 0.900 & {\color{red}0.922} & \textbf{0.922} & 0.3 \\ 
smiley & 500 & 2 & 4 & 0.990 & 0.986 & {\color{red}0.992} & 0.992 & 0.985 & 0.2 \\ 
soybean.l & 266 & 35 & 15 & {\color{red}0.886} & 0.699 & 0.655 & 0.779 & 0.631 & 2.3 \\ 
soybean.s & 47 & 35 & 4 & 0.975 & 0.482 & 0.579 & 0.950 & {\color{red}\textbf{0.986}} & 0.2 \\ 
tae & 151 & 5 & 3 & {\color{red}0.507} & 0.379 & 0.478 & 0.489 & 0.455 & 0.8 \\ 
thyroid & 3772 & 21 & 3 & {\color{red}0.997} & 0.993 & 0.996 & 0.997 & 0.962 & 1.6 \\ 
wine & 178 & 13 & 3 & 0.920 & 0.897 & 0.889 & {\color{red}0.921} & \textbf{0.911} & 0.3 \\ 
WineQuality & 4898 & 11 & 7 & {\color{red}0.574} & 0.532 & 0.532 & 0.513 & 0.534 & 39.3 \\ 
xor3 & 600 & 3 & 4 & 0.711 & 0.217 & {\color{red}0.956} & 0.709 & \textbf{0.931} & 1.1 \\ 
\hline
\textbf{Average} & & & & 0.828 & 0.720 & 0.773 & 0.794 & 0.810 & 8.3 \\
\hline
\end{tabular}
\end{table}

There are a few points to note: (1) in each run,  all five methods were fed with the same training and test sets, so the comparison was apple-to-apple; (2) all the original $p$ independent variables contained in each data set were used - no prior variable selection was done; (3) the tree-building functions from all the five packages were called in the simplest form,  i.e.,  only the ``regression formula'' and the training data set were supplied as arguments in the function calls,  to produce results that represent the out-of-the-box performance. 

In Tables \ref{tb:result_biclass} and \ref{tb:result_multclass},  the best Accuracy and AUC values in each data set were highlighted in red.  In addition,  the {\ttfamily bsnsing} results that were above average among the five methods were printed in boldface.  We can see that for binary classification tasks,  the {\ttfamily bsnsing} package is in the leading position under the AUC category - it won 28 cases out of 57, whereas tree, C50,  ctree and rpart won 14,  10,  8 and 3 cases, respectively.  The {\ttfamily bsnsing} package also scored above average in 48 data sets,  i.e., in 84\% of all cases.  Therefore,  as far as the AUC performance is concerned,  {\ttfamily bsnsing} should be the package of choice for binary classification tasks.   In terms of the Accuracy metric,  C50 was clearly the leading one,  winning 27 cases.  For practitioners,  we comment that AUC represents a model's ability to correctly rank order new data points according to their likelihood of belonging to the target class.  The specific score threshold for making classifications is usually application-dependent, e.g.,  depending on the comparative costs of making a FP claim versus making a FN claim about a given new case.  In contrast,  the classification accuracy measures the overall proportion of false claims,  i.e., by treating FP and FN claims with equal weight, at a chosen score threshold.  Therefore,  we remark that AUC is a more well-rounded performance metric than classification Accuracy for binary classifiers. 

While the OPT-G model is only applicable for binary classification,  the {\ttfamily bsnsing} function can handle multi-class classification tasks as well.  When more than two unique levels of the target variable are present in the training data set,  a binary classification tree is built for each level (as the positive class) versus all the other levels (as the negative class).  In the prediction stage,  a score (i.e., probability prediction) is produced from each tree and the target level having the greatest score will be assigned as the class label for the new case.  From Table \ref{tb:result_multclass} we can see that {\ttfamily bsnsing}'s multi-class performance is second only to C50. A caveat is that the current way {\ttfamily bsnsing} handles multi-class classification tasks is more of ensemble learning rather than the decision tree learning, and the model's interpretability is not preserved.

For {\ttfamily bsnsing}, the median time to train a binary classification model is 0.6 seconds, tallied over the 1140 training instances (i.e., 57 data sets,  20 instances each), and the median time to train a multi-class classification model is 1.2 seconds,  tallied over the 360 training instances.  The computing time is much more tolerable than most (if not all) MIP-based optimal classification tree methods.

\subsection{Usage notes of the {\ttfamily bsnsing} package	}

Alluding to the No Free Lunch Theorem \citep{585893}, no single machine learning algorithm is universally the best-performing algorithm for all problems. To be generally useful for classification problems, most decision tree algorithms allow users to control the behavior of the algorithm via choosing values for a number of hyperparameters. Such flexibility can be a double-edged sword to the usability of an algorithm. Seasoned users, most likely developers, can have the convenience of experimenting with the algorithm without changing the code, but ordinary users unconcerned of the internal workings of the underlying algorithm may find too many parameters perplexing. The large, sometimes infinite, value space of hyperparameters also presents practical challenges to automated parameter-tuning processes. For example, a clear-cut valley point of the generalization error curve in the bias-variance tradeoff analysis (see Chapter 2 of \cite{James:2014:ISL:2517747}) may be difficult to identify, especially when the available training samples are relatively few in a high-dimensional feature space.

To ease the usage, we provide some guidance for parameter selection for the {\ttfamily bsnsing} algorithm from an ordinary user's perspective. The most important parameters for {\ttfamily bsnsing} are {\ttfamily max.rules} and {\ttfamily node.size}. For {\ttfamily max.rules}, we recommend using the default value of 2 for a good balance between training speed and model performance. A higher value would increase the solution time of OPT-G particularly at the root node, as can be observed in Table \ref{tb:ENUM_neval}. In the meantime, a higher value would not necessarily translate to a better classification performance because of the heuristic nature of the recursive partitioning process. For {\ttfamily node.size}, we recommend using the default value of 0 first, meaning to set the minimum node size dynamically. A larger value of {\ttfamily node.size} would lead to a smaller (thus more interpretable) tree but might underfit the data, while a smaller value would lead to a bigger tree and might leave some true patterns undistinguished. If it is known that strong, learnable patterns exist in a data case, then manually setting {\ttfamily node.size} to a small value, i.e., some positive integer smaller than $\sqrt{n}$, is likely to improve the classification performance over the default setting. Lastly, if interpretability is unimportant, an ensemble of several {\ttfamily bsnsing} trees each trained with different hyperparameter values can effectively boost the performance. 

Let us look at some concrete examples. We notice from Table \ref{tb:result_biclass} that on three data sets, namely, Monks2, spirals and tictactoe, {\ttfamily bsnsing} performed especially poorly compared to the best-performing method. This suggests that discoverable patterns exist in these data sets and that {\ttfamily bsnsing} could be configured more flexible at discovering them. Indeed, if we reduce the {\ttfamily node.size} value, a significant improvement in the out-of-sample performance can be realized for three cases, as shown under the ``Improved'' columns in Table \ref{tb:bsnsing_tuning}. 
A greater improvement has also been achieved by the ensemble approach, in which we trained a total of nine trees with parameter combinations of {\ttfamily max.rules} $\in [1, 2, 3]$ and {\ttfamily node.size} $\in [0, 1, 10]$. The class membership prediction was the result of majority voting, and the score prediction was the average of the scores predicted by the nine trees in the ensemble. The total time (in seconds) taken to train the nine trees remains quite manageable, as shown in the column CPU in Table \ref{tb:bsnsing_tuning}. 
More detailed usage examples with sample code are provided in the (online) appendix.

\begin{table}
\centering
\small
\caption{Improve the {\ttfamily bsnsing} performance via changing {\ttfamily node.size} and using ensemble.} \label{tb:bsnsing_tuning}
\begin{tabular}{l r r@{\hskip 0.3in} r r r@{\hskip 0.3in} r r r}
\hline
& \multicolumn{2}{c}{Original} & \multicolumn{3}{c}{Improved} & \multicolumn{3}{c}{Ensemble} \\
\cline{2-9}
& Accu & AUC & Accu & AUC & Parameter & Accu & AUC & CPU \\
\hline
Monks2 & 0.607 & 0.598 & 0.895 & 0.898 & node.size=1 & 0.736 & 0.921 & 8.8 \\
spirals & 0.783 & 0.847 & 0.927 & 0.943 & node.size=3 & 0.911 & 0.974 & 5.0 \\
tictactoe & 0.803 & 0.869 & 0.911 & 0.934 & node.size=3 & 0.920 & 0.982 & 12.0 \\
\hline
\end{tabular}
\end{table}

\section{Conclusion and Future Work} \label{sec:conclusion}
In this paper,  we have proposed a new method for classification tree induction that combines mathematical optimization with the recursive partitioning framework.  The method optimally selects a boolean combination of multiple variables to maximize the Gini reduction in each node split.  The split optimization model is the first one that is able to maximize the well-justified but nonlinear Gini reduction metric in a mixed-integer linear program.  We have developed an efficient search algorithm to solve realistically regulated instances faster than commercial optimization solvers,  making the overall solution scheme,  as well as the R package {\ttfamily bsnsing},  more accessible to both the practitioners' and the developers' community.  Evaluation results have suggested that the {\ttfamily bsnsing} package can generate the best AUC performance among other decision tree packages in R,  at the cost of a median training time of a few seconds.

A central theme in the design of classification tree algorithms is making tradeoffs to strike a balance among competing objectives, such as speed,  accuracy and interpretability.  We believe that optimization modeling is no substitute for the recursive partitioning framework,  it, however, can alleviate some structural restrictions via answering key design questions in a new light.  
One of the benefits of using mathematical optimization in decision tree induction is that it makes the process more tractable and justifiable.  
As observed in previous works and in this paper,  properly regularized optimal trees do not lead to overfitting.  Therefore,  decision tree optimization is worthy of further development. 

Several aspects of the present work can be extended.  First,  the OPT-G model exhibits a clear sparsity pattern which may be exploited to expedite the solution.  For instance,  it is possible to adapt the bounding technique in the ENUM algorithm to the branch-and-bound framework via adding user cuts,  and to develop multiple branch-and-bound trees for parallel computing.  However, this would require the use of advanced callback functions which are not currently supported in R APIs (for both Gurobi and CPLEX).  Implementation in other languages could exploit these possibilities.  Second,  the feature binarization process is unoptimized,  and the actual utility of the candidate split rules are unquantified.  It is possible that a good proportion of the candidate rules are dominated by others hence need not be generated in the first place.  Future research could explore the column generation paradigm to generate high-value binary features on the fly during the optimal selection process.  

\appendix
\section*{Appendix: Usage demonstration of the {\ttfamily bsnsing} package}
To install the {\ttfamily bsnsing} package,  an R user can run this command:  {\ttfamily library(devtools); install\_github(`profyliu/bsnsing')}.  The following code snippet demonstrates a stylized use case of building and evaluating a decision tree model.  

\begin{lstlisting}
library(bsnsing)
set.seed(2021)  # Set seed for RNG in the sample() function
n <- nrow(BreastCancer)
trainset <- sample(1:n, 0.7*n)  # randomly sample 70% for training
testset <- setdiff(1:n, trainset)  # the remaining is for testing
# Build a tree to predict Class, using all default options
bs <- bsnsing(Class~., data = BreastCancer[trainset,])
summary(bs)  # display the tree structure, see Figure 6
pred <- predict(bs, BreastCancer[testset,], type='class')
actual <- BreastCancer[testset, 'Class']
table(pred, actual)  # display the confusion matrix
# Plot the ROC curve and display the AUC
ROC_func(data.frame(predict(bs, BreastCancer[testset,]), 
                    BreastCancer[testset,'Class']), 
         2, 1, pos.label = 'malignant', plot.ROC=T)
# Plot the tree to PDF file and generate the latex source code
plot(bs, file='../bsnsing_test/fig/BreastCancer.pdf')  # see Figure 7 left
\end{lstlisting}

The model summary (generated by line 8) prints out the tree structure as well as node information in plain text,  shown in Figure \ref{fig:BC_tree_summary}.  We can read from the printout,  for example,  that the root node (Node 0) is classified as 0 (benign) with probability 0.6585,  and that 100\% of all training observations fall in this node,  of which 167 observations are class 1 and 322 observations are class 0.  
The confusion matrix on the training set is given at the end of the summary print.   Detailed information of the {\ttfamily bsnsing} tree object can be accessed by the R command {\ttfamily str(bs)}. 

\begin{figure}[!htb]
\includegraphics[width=1\linewidth]{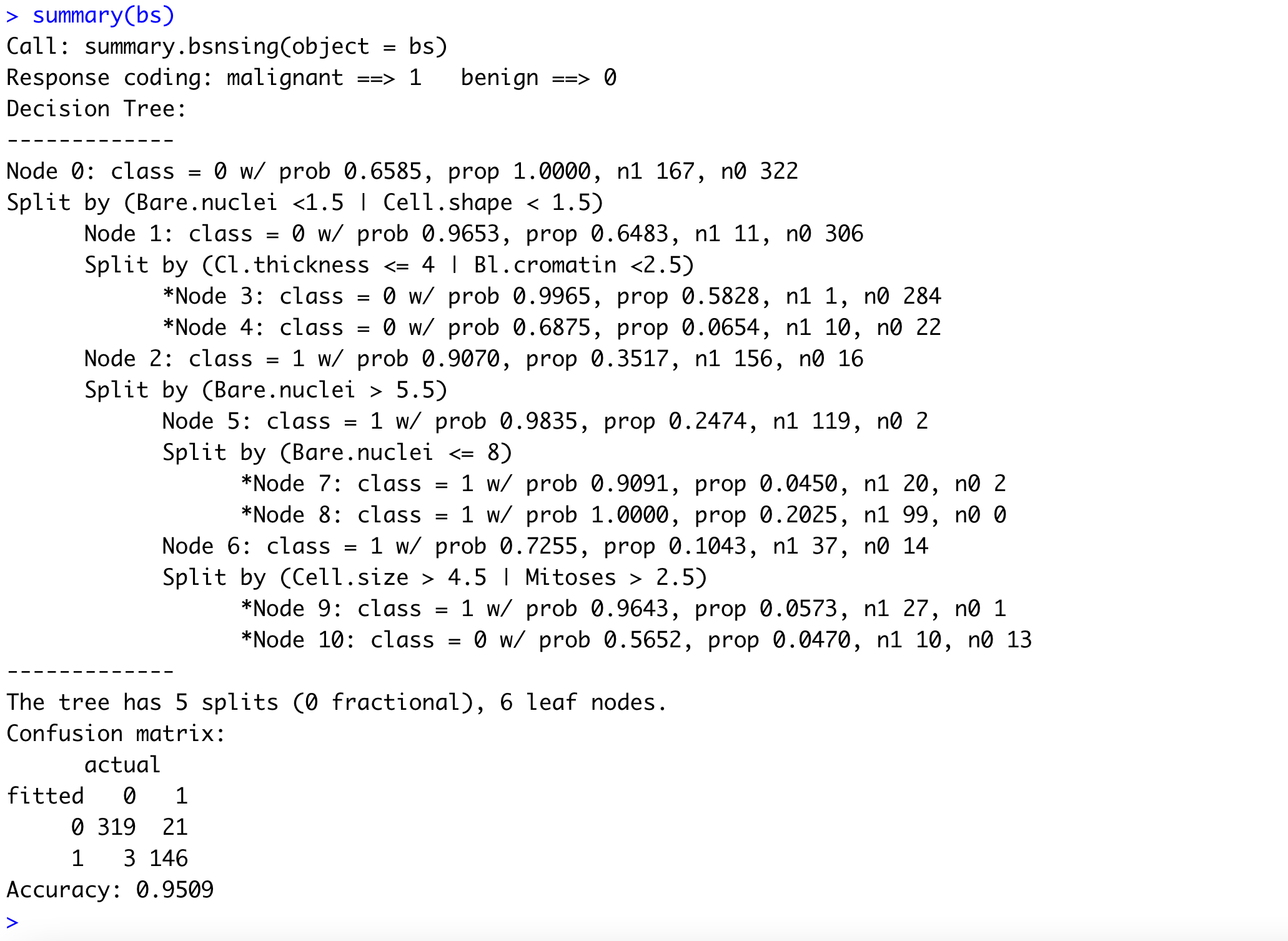}
\caption{Summary display of the {\ttfamily bsnsing} tree for the BreastCancer data set.} \label{fig:BC_tree_summary}
\end{figure}

\begin{figure}[!htb]
    \centering
    \begin{subfigure}[T]{0.7\textwidth}
        \includegraphics[width=1 \linewidth]{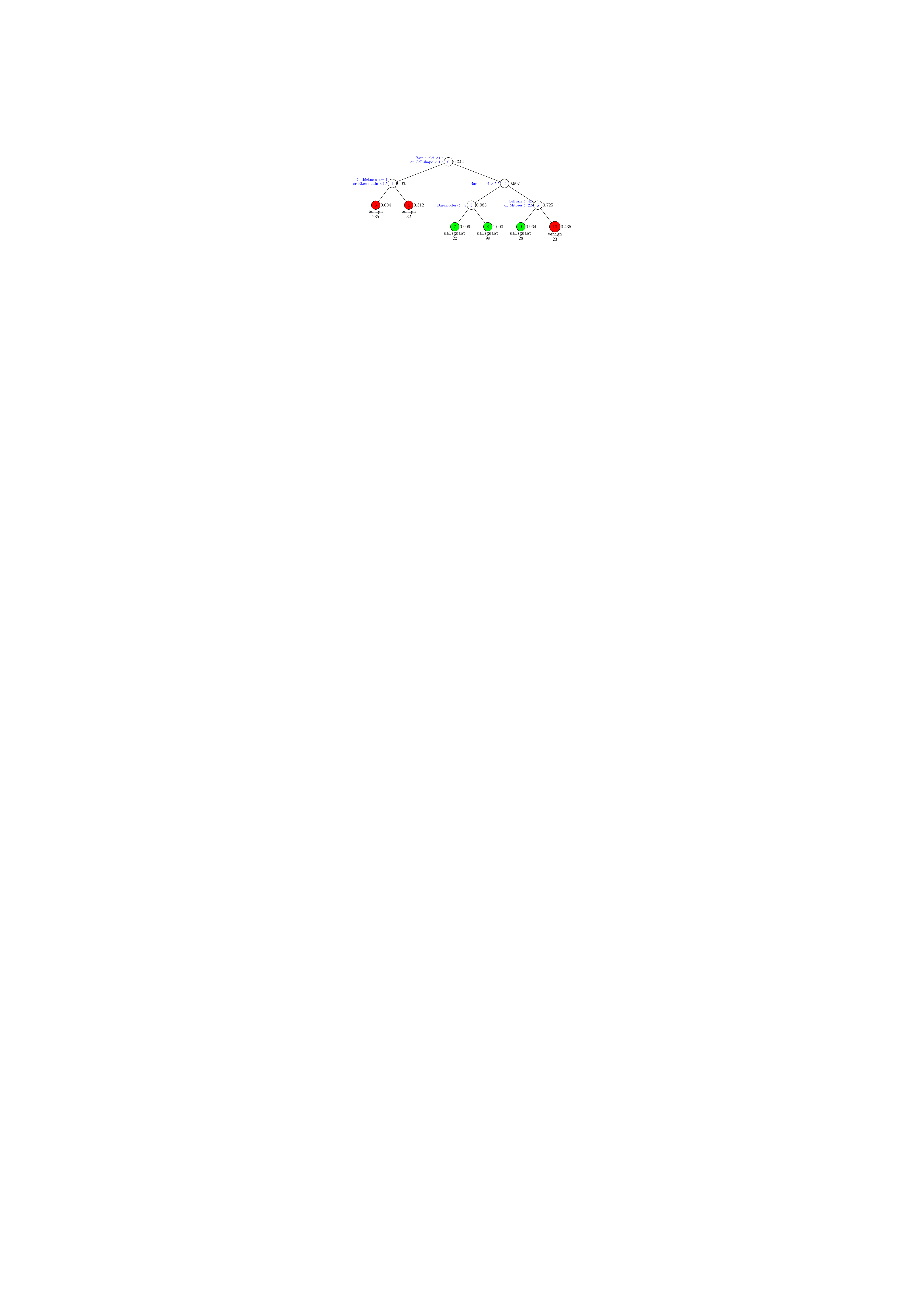}
    \end{subfigure}
    \quad
    \begin{subfigure}[T]{0.23\textwidth}
        \includegraphics[width=1 \linewidth]{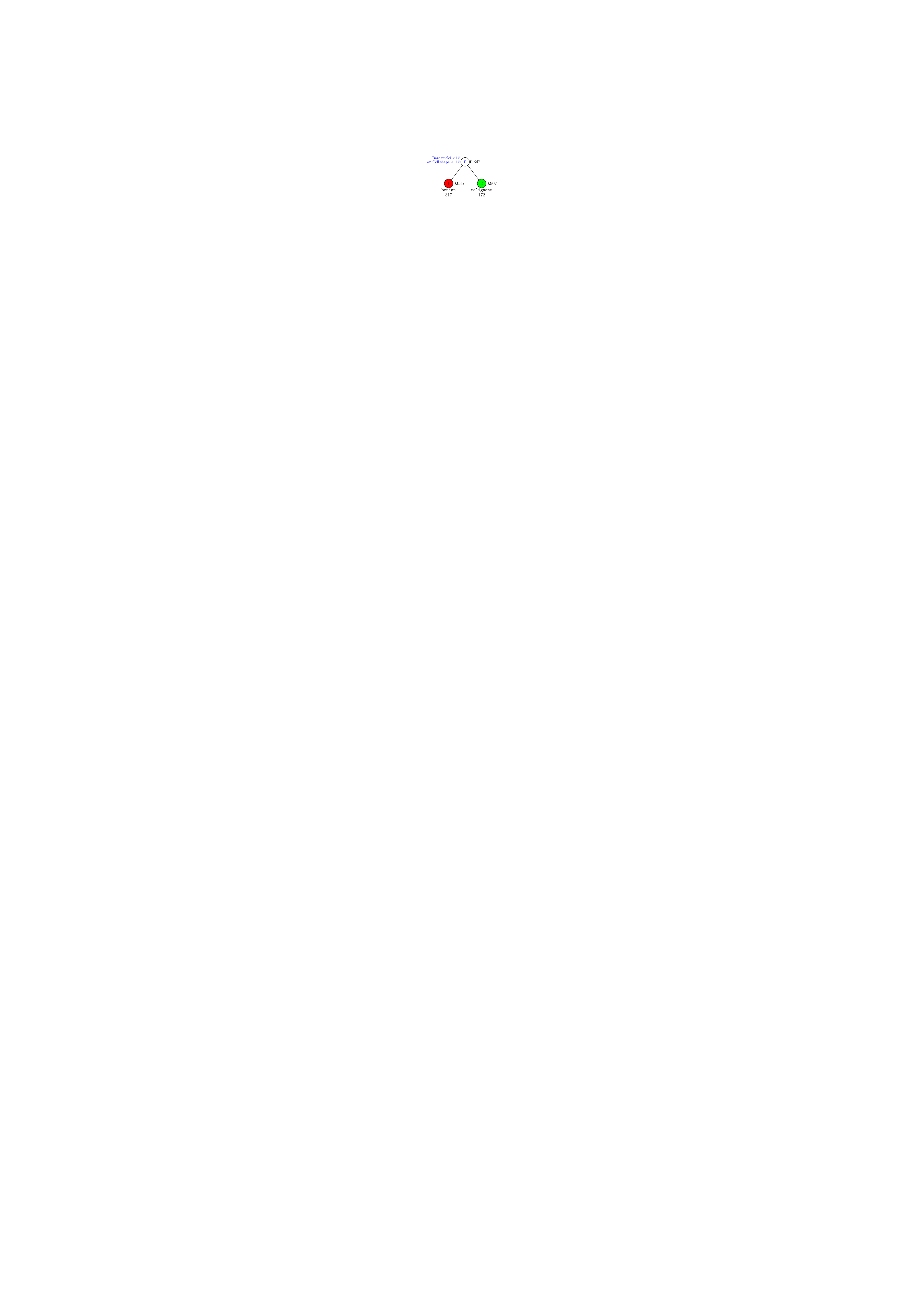}
    \end{subfigure}
\caption{Discriminability versus interpretability.  The left tree is built with the default options,  and the right tree is built with option {\ttfamily no.same.gender.children = True}.} \label{fig:BC_tree_demo}
\end{figure}

\begin{figure}[!htb]
\centering
\includegraphics[width=0.5\linewidth]{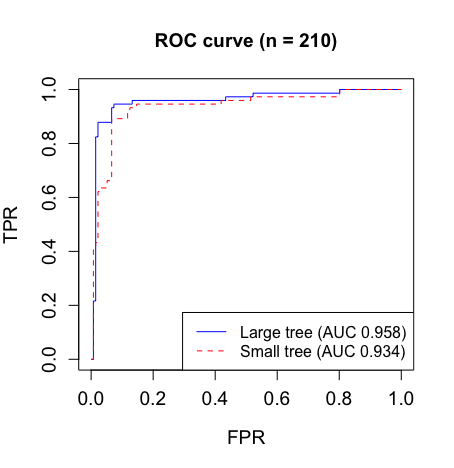}
\caption{ROC curves constructed on 210 test cases of the BreastCancer data set for two {\ttfamily bsnsing} trees.  The large tree was built with the default options,  and the small tree with option {\ttfamily no.same.gender.children = True}.} \label{fig:BC_tree_ROC}
\end{figure}

The {\ttfamily bsnsing} package implements the S3 method {\ttfamily plot} for plotting the {\ttfamily bsnsing} object (see line 17).  If a file name is provided (as shown in code), the function will save the latex scripts (that utilize the {\ttfamily tikz} package) to a .tex file and attempt to build the .ps and .pdf files by calling {\ttfamily latex,  dvips,  ps2pdf} commands if they are installed.  The plot is shown in Figure \ref{fig:BC_tree_demo} left.   Each node is represented by a circle with the node number printed inside the circle.  The color of a leaf node indicates its predicted class,  green for positive (class 1) and red for negative (class 0).  The split rule is shown on the left of each internal node in blue,  and the class 1 probability of the node is shown on the right of the node.  At the bottom of each leaf node,  the predicted label (in this case, malignant or benign),  along with the number of training observations that fall in the node,  is printed.  Of course,  these features can be easily customized and extended by other developers.  The right side of Figure \ref{fig:BC_tree_demo} plots a smaller tree generated on the same training set with the option {\ttfamily no.same.gender.children = True},  to suppress splits that would generate child nodes having the same majority class.  Figure \ref{fig:BC_tree_ROC} compares the ROC curves of these two trees.  In these particular cases,  some substantial improvement in interpretability was only accompanied by a slight drop in AUC,  so the smaller tree (in the author's opinion) is preferred.  Try-and-compare is a common practice in predictive analytics,  and the {\ttfamily bsnsing} library is generally fast enough and flexible enough to support such practice.


\bibliographystyle{plainnat}  
\bibliography{treebib} 

\end{document}